\documentclass[letterpaper, 10 pt, conference]{ieeeconf}
\IEEEoverridecommandlockouts  
\usepackage{graphicx}

\newif\ifarxiv

\arxivfalse

\IEEEoverridecommandlockouts

\usepackage{cite}
\usepackage{amsthm}  
\usepackage{amsmath,amssymb,amsfonts}
\usepackage{graphicx}
\usepackage{textcomp}
\usepackage[dvipsnames]{xcolor}         
\def\BibTeX{{\rm B\kern-.05em{\sc i\kern-.025em b}\kern-.08em
    T\kern-.1667em\lower.7ex\hbox{E}\kern-.125emX}}
\usepackage[hidelinks]{hyperref}       

\usepackage[utf8]{inputenc} 
\usepackage[T1]{fontenc}    
\usepackage[acronym]{glossaries}
\usepackage{arydshln}

\usepackage[american]{babel}

\usepackage{todonotes}
\usepackage{xspace}
\usepackage[linesnumbered,ruled,noend]{algorithm2e}

\usepackage{mathtools}
\usepackage{cite}
\usepackage{amsmath,amssymb,amsfonts}
\usepackage{bbm}

\usepackage{algpseudocode}
\usepackage{subcaption}
\usepackage{booktabs}
\usepackage{eso-pic}
\usepackage{multirow}
\usepackage{xcolor}

\newcommand{\unsafe}{\mathrm{u}}
\newcommand{\safe}{\mathrm{s}}
\newcommand{\initial}{\mathrm{0}}
\newcommand{\set}[1]{\{#1\}}

\newcommand{\pv}{\mathbf{v}}
\newcommand{\pw}{\mathbf{w}}
\newcommand{\px}{\mathbf{x}}

\newcommand{\po}{\mathbf{o}}

\newcommand{\pz}{\mathbf{z}}

\newcommand{\pX}{\mathbf{x}}

\renewcommand{\Pr}{\mathrm{Pr}}

\newcommand{\algebraic}{X = \big \{ x \in \reals^n \mid h_l(x) \geq 0 \; \forall l \in \set{1, \ldots, L} \big \}}

\newcommand{\obs}{X_{o}^{j}}
\newcommand{\obsU}{X_{o}}
\newcommand{\nobs}{N_{o}}
 
\newcommand{\reals}{\mathbb{R}}
\newcommand{\naturals}{\mathbb{N}}

\newcommand{\N}{\mathcal{N}}

\newcommand{\low}[1]{\underline{#1}}

\newcommand{\hor}{H}
\newcommand{\RA}{B}
\newcommand{\Pfin}{P_{\safe}(X_\safe,X_0,\{\obsU(k)\}_{k=0}^{H},\hor)}
\newcommand{\Ps}{P_{\safe}(X_\safe,X_0,\hor)}

\newtheorem{lemma}{Lemma}
\newtheorem{definition}{Definition}
\newtheorem{problem}{Problem}
\newtheorem{assumption}{Assumption}
\newtheorem{proposition}{Proposition}
\newtheorem{corollary}{Corollary}
\newtheorem{remark}{Remark}
\newtheorem{theorem}{Theorem}

\usepackage{fancyhdr}

\begin{document}

\title{\LARGE \bf
Stochastic Barrier Certificates
in the Presence of Dynamic Obstacles
}

\author{Rayan Mazouz$^{1}$, Luca Laurenti$^{2}$, Morteza Lahijanian$^{1}$
\thanks{$^{1}$ Authors are with the Dept. of Aerospace Eng. Sciences at University of Colorado Boulder, USA, 
       \texttt{\{firstname.lastname\}@colorado.edu}}%
\thanks{$^{2}$Author is with Delft University of Technology, The Netherlands,
\texttt{\{l.laurenti\}@tudelft.nl}}
}

\maketitle

\ifarxiv
    \fancypagestyle{firstpage}{
        \fancyhf{} 
        \fancyhead[C]{To appear in IEEE Control Systems Letters (L-CSS) 2026} 
        \renewcommand{\headrulewidth}{0.0pt} 
    }
    \thispagestyle{firstpage}
\else
    \thispagestyle{empty}
\fi

\begin{abstract}
    Safety of stochastic dynamic systems in environments with dynamic obstacles is studied in this paper through the lens of stochastic barrier functions. We introduce both time-invariant and time-varying barrier certificates for discrete-time, continuous-space systems subject to uncertainty, which provide certified lower bounds on the probability of remaining within a safe set over a finite horizon. These certificates explicitly account for time-varying unsafe regions induced by obstacle dynamics. By leveraging Bellman's optimality perspective, the time-varying formulation directly captures temporal structure and yields less conservative bounds than state-of-the-art approaches.  By restricting certificates to polynomial functions,  we show that time-varying barrier synthesis can be formulated as a convex sum-of-squares program, enabling tractable optimization. Empirical evaluations on nonlinear systems with dynamic obstacles show that time-varying certificates consistently achieve tight guarantees, demonstrating improved accuracy and scalability over state-of-the-art methods.
\end{abstract}

\section{Introduction}

Complex dynamical systems operating under uncertainty make \emph{safety guarantees} a central challenge in cyber-physical systems\cite{guiochet2017safety}. In many real-world settings, these systems operate in environments that evolve over time, such as the presence of moving obstacles or time-varying constraints. Providing rigorous guarantees of system behavior in these settings requires reasoning over continuous state spaces, nonlinear stochastic dynamics, and temporally evolving safety specifications. Certificate-based methods, in particular stochastic barrier functions (SBFs) \cite{santoyo2021barrier, jagtap2020formal, mazouz2022safety} have emerged as a powerful tool for certifying probabilistic safety of stochastic systems. However, existing formulations largely assume static environments and rely on a single certificate enforced uniformly across time, limiting their ability to capture the induced temporal structure. In this work, we address these challenges by developing a time-varying certificate-based framework that provides formal guarantees for stochastic systems evolving in dynamic environments. 

Existing certificate-based methods for finite-horizon safety guarantees of stochastic systems typically rely on a single barrier certificate enforced uniformly across all time steps \cite{yaghoubi2020risk, 
clark2021control, mazouz2024piecewise,
 xue2025finite}, which is inherently suboptimal as it ignores the sequential nature of the stochastic evolution. Further, the time-dependent structure of the safety constraints, particularly in the presence of dynamic obstacles, are not directly captured. Recent work introduces interpolation-based barrier certificates to incorporate temporal variation \cite{oumer2025k}; however, the resulting guarantees are constructed  
 via Markov's and Boole's inequalities, which introduce additional conservatism in the computed probability of safety. Further, due to bilinear terms in the martingale condition, the formulation does not lend itself to a convex functional optimization problem. Consequently, existing formulations that incorporate temporal structure in the certificate construction remain limited in their scalability and ability to provide tight guarantees. 

In this paper, we focus on time-varying SBF certificates for discrete-time nonlinear stochastic systems with continuous spaces. 
First, we introduce a time-invariant approach that incorporates dynamic obstacles in an augmented space.  Then, we introduce a framework grounded in Bellman's optimality principle, i.e., dynamic programming (DP), which captures the backward reachability structure of safety and yields temporal
guarantees over finite horizons. This perspective naturally motivates varying certificates that evolve over time, thereby providing tighter bounds on safety compared to the time-invariant approach, and other state-of-the-art methods \cite{oumer2025k}. Moreover, by restricting the dynamics to the space of polynomial functions in the experiments, the proposed conditions admit a tractable sum-of-squares (SOS) relaxation \cite{legat2017sos, mazouz2026stochasticbarrier}, enabling numerically stable synthesis \cite{wajid2025successive, panja2025correct} via semi-definite programming \cite{vandenberghe1996semidefinite}. We evaluate the proposed framework on a range of nonlinear systems with dynamic obstacles, confirming the theoretical guarantees and demonstrating improved probability bounds and scalability compared to existing formulations.

In short, the main contributions of this paper are:
\begin{itemize}
    \item A SBF certificate formulation encompassing both \emph{time-invariant} and \emph{time-varying} constructions for discrete-time stochastic systems over \emph{finite}-horizon settings, providing certified lower bounds on safety probabilities while accounting for \emph{dynamic obstacles}.
    \item A dynamic programming–based characterization of time-varying certificates, 
    establishing reduced conservatism in the resulting probability bound.
    \item Validation of the theoretical guarantees, along with benchmarking comparisons that highlight the advantages of the proposed time-varying formulation over existing certificate approaches.
\end{itemize}

\paragraph*{Related Work} 

Reasoning about safety in stochastic systems with dynamic, time-varying environments has been addressed from multiple perspectives, including motion planning, control and formal verification. Approaches for safe motion planning with dynamic obstacles \cite{aoude2013probabilistically, castillo2020real} typically rely on trajectory optimization or chance-constrained formulations, enabling (real-time) avoidance under uncertainty. Similarly, multi-agent and multi-robot frameworks \cite{theurkauf2025multi} focus on coordinating interactions in dynamic settings. However, these methods primarily operate at the planning level, and do not provide certificate-based guarantees that formally bound safety probabilities over continuous state spaces. 
To obtain such guarantees, one must instead reason at the level of system-wide certificates. Barrier-based methods provide this perspective, with SBFs enabling verification and control synthesis via functional conditions that can often be cast as convex programs. While barrier-based methods provide this certificate perspective, they are typically conservative in practice, as the imposed functional conditions restrict the set of admissible certificates. Although extensions such as piecewise SBFs \cite{mazouz2024piecewise, mazouz2025piecewise} reduce this conservatism, they remain confined to time-invariant formulations.

Other approaches rely on learning based or data-driven approximations of the certificate \cite{binny2026safe, salamati2021data, mazouz2024data, tayal2024learning, kong2026training}; however, such methods remain difficult to verify against hard safety constraints \cite{binny2026safe, tayal2024learning} or do not explicitly capture the temporal evolution of the certificate \cite{salamati2021data, mazouz2024data, kong2026training}. Overall, existing methods either lack formal certificate guarantees in dynamic environments or fail to fully exploit temporal structure in a tractable manner. In contrast, this work develops a DP–based formulation of time-varying stochastic barrier certificates that directly encodes temporal evolution that yield tight bounds.
\section{Problem Formulation}


We consider a discrete-time stochastic system described by the stochastic difference equation
\begin{equation} 
    \label{eq:system}
    \px_{k+1} = f(\px_{k}, \pw_k),
\end{equation}
where $k \in \mathbb{N}_0$ is the time step, $\px_k \in \mathbb{R}^n$ is the state at time step $k$, $f:\mathbb{R}^n \times \mathbb{R}^w \to \mathbb{R}^n$ is a Lipschitz continuous vector field, 
and $\pw_k \in \mathbb{R}^w$ is an i.i.d. random variable distributed according to $p_\pw$.
Intuitively, $\px_k$ represents a general stochastic process whose evolution may be governed by nonlinear dynamics and non-additive noise.

Given a state $x \in \mathbb{R}^n$ and a Borel-measurable set $X$, we define the transition kernel of System~\eqref{eq:system} as
\begin{equation}
\label{eq:transition_kernel}
    T(X\mid x):= \int_{\mathbb{R}^{\mathrm{w}}} \mathbf{1}_X
(f(x,w))p_{\mathbf{w}}(dw),
\end{equation}
where $\mathbf{1}_X(x) = 1$ if $x \in X$ and 0 otherwise.
It follows that, from an initial state $x_0 \in \mathbb{R}^n$, kernel $T$ defines a unique probability measure $\mathrm{Pr}^{x_0}$ over trajectories of System~\eqref{eq:system}~\cite{bertsekas2004stochastic}.


We consider the setting where System~\eqref{eq:system} shares the space with $\nobs \in \naturals_{\geq 1}$ dynamic obstacles, i.e., unsafe sets (e.g., other agents). 
Each obstacle $j \in \{1, \hdots, \nobs \}$ is a compact 
set, whose geometry at time $k$ is parameterized by configuration $o_k^j \in \reals^{p_j}$, e.g., configuration defined by center and orientation.
Configuration $o_k^j$ evolves according to
\begin{equation}
    \label{eq:obstacle}
    o^{j}_{k+1} = g^{j}(o_k^{j}),
\end{equation}
where $g^{j}: \reals^{p_j} \to \reals^{p_j}$ is a 
Lipschitz continuous map. 
Let $X^j_o: \reals^{p_j} \to \reals^n$ be the function that maps the obstacle configuration $o^j \in \reals^{p_j}$ to the set of points $X^j_o(o^j_k) \subset \reals^n$ that it occupies in the state space.
A collision occurs between System~\eqref{eq:system} and obstacle $j$ at time $k$ if $\px_k \in \obs(o^j_k)$.

In this work, we focus on quantifying \emph{probabilistic safety}, which, together with its dual notion of \emph{probabilistic reachability} \cite{laurenti2025unifying}, is commonly used to assess safety properties of stochastic systems. The following definition formalizes probabilistic safety in the presence of dynamic obstacles.

\begin{definition}[Probabilistic Safety]
    \label{def: probabilistic safety}
    Let $X_\safe \subset \reals^{n}$ be the safe set, $X_0 \subseteq X_\safe$ the initial set,
    $\obsU(k) = \cup_{j=1}^{\nobs} X^j_o(o^j_k)$ 
    the obstacle set at time $k$,
    and $H \in \mathbb{N}$ the time horizon.
    Then probabilistic safety in the presence of dynamic obstacles is defined as
    \begin{multline*}
        \Pfin
        = \\
        \inf_{\substack{x_0 \in X_0}}
        \Pr^{x_0} \Big[
        \forall k \in \naturals_{\geq0}^{\leq H},\
        \pX_k \in 
        X_\safe 
        \setminus
        X_{o}(k)
        \Big].
    \end{multline*}
    In the absence of dynamic obstacles, the notation simplifies to $\Ps$.
\end{definition}

We are interested in both the theoretical and computational aspects of obtaining $P_\safe$. On the theoretical side, we develop a general framework for time-varying SBFs. On the computational side, we focus on tractable synthesis via SOS optimization, for which we impose the following assumption.

\begin{assumption}
    \label{asp:1}
    The system dynamics $f$ and obstacle maps $g^j$ are polynomial functions. All relevant sets  are semi-algebraic, i.e., they admit descriptions of the form $X = \{ x \in \mathbb{R}^n \mid h(x) \ge 0 \},$ where $h$ is a vector of polynomials. We further assume that distribution $p_\pw$ admits a closed-form moment generating function.
\end{assumption}

The problem we consider in this work is as follows.

\begin{problem}[Safety Verification]
\label{Prob:Verification}

    Consider stochastic System~\eqref{eq:system} on a compact safe set $X_\safe \subset \reals^{n}$ with dynamic obstacles $X_o^j(o^j_k)$, $j=1,\ldots,\nobs$ evolving according to~\eqref{eq:obstacle}. Given Assumption~\ref{asp:1}, initial set $X_0 \subseteq X_\safe$, and obstacle initial configurations $o_0^j \in \reals^{p_j}$, horizon $\hor \in \mathbb{N}$, and threshold $p_\safe \in [0,1]$, verify that the system stays in $X_\safe$ and avoids all obstacles over $\hor$ steps with probability at least $p_\safe$.
    i.e.,
    \begin{equation*}
        \Pfin \ge p_\safe .
    \end{equation*}
\end{problem}

Problem~\ref{Prob:Verification} aims to compute the probability that the system remains within a prescribed safe set while avoiding collisions with dynamic obstacles. This computation is particularly challenging because System~\eqref{eq:system} is stochastic and the obstacles are dynamic. Further, the functions $f$ and $g^j$ may be nonlinear. These features require propagating the state distribution through nonlinear stochastic dynamics while enforcing time-varying safety constraints at each time step, making finite-horizon safety probabilities difficult to characterize.
A common approach is to employ barrier certificates; however, the presence of time-varying obstacles renders their construction and analysis nontrivial.

\paragraph*{Approach overview}
%
Our approach to Problem~\ref{Prob:Verification} is based on \emph{safety certificates}, which serve as value functions that certify a lower bound on the probability of remaining safe over a finite horizon. We consider two formulations: one that lifts the problem to a time-invariant setting, enabling the use of existing barrier certificate constructions but with limited scalability due to the increased problem dimension. The second formulation directly constructs \emph{time-varying} certificates that explicitly capture dynamic obstacles, yielding tighter bounds and improved scalability. 
As part of this approach, we introduce a time-varying formulation for stochastic barrier functions.

\section{Preliminaries}
\label{sec:SOS}

Our approach to generate certificates is grounded in SBFs and polynomial functional analysis. 

\subsection{Stochastic Barrier Functions}
Consider {System}~\eqref{eq:system} with safe set $X_{\mathrm{s}} \subset \mathbb{R}^n$, initial set $X_0 \subseteq X_{\mathrm{s}}$, and unsafe set $X_{\mathrm{u}} = \mathbb{R}^n \setminus X_{\mathrm{s}}$. A function $B : \reals^n \to \mathbb{R}$ is called a SBF if there exist constants $\alpha, \beta \geq 0$ such that
\begin{subequations}
    \label{eq:time-invariance SBF}
    \begin{align}
        &B(x) \geq 0 \qquad &&\forall x\in \reals^n\label{eq:barrier_ss},\\
        &B(x) \geq 1 \qquad &&\forall x\in X_\mathrm{u}\label{eq:barrier_unsafe},\\
        &B(x) \leq \alpha \qquad &&\forall x\in X_0\label{eq:barrier_initial},\\
        &\mathbb{E}[B(f(x, \pw)) \; {|} \; {x}] \leq  B(x) + \beta \qquad && \forall x\in X_\mathrm{s} \label{eq:barrier_expectation},
    \end{align}
\end{subequations}
where the expectation is taken with respect to the transition kernel $T(\cdot \mid x)$ in~\eqref{eq:transition_kernel}.
Given a stochastic barrier function $B$, a lower bound on the safety probability of {System}~\eqref{eq:system} is characterized as follows.
\begin{theorem}[\!{\cite{laurenti2025unifying}}]
\label{th:barrier}
If there exist function $B$ and scalars $\alpha,\beta$ that satisfy {conditions}~\eqref{eq:barrier_ss}-\eqref{eq:barrier_expectation}, then for a horizon $\hor\in \mathbb{N}$, it follows that
    \begin{align}
        \Ps
        \geq
        1-
        {\min  \{ 
        ( \alpha + \beta \hor), 1 \} }.\label{eq:probabilityBarrierFunctions}
    \end{align}
\end{theorem}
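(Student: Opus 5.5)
The plan is the standard supermartingale argument applied to the stopped process, followed by Ville's inequality or a direct Markov-type bound. The case $\alpha+\beta\hor\ge 1$ is trivial, since the bound reads $\Ps\ge 0$, so I will assume $\alpha+\beta\hor<1$.

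Fix $x_0\in X_0$ and let $\tau=\inf\{k\ge 0: \px_k\in X_\unsafe\}$ be the first exit time from $X_\safe$. I define the stopped process $Y_k = B(\px_{k\wedge\tau})$. By \eqref{eq:barrier_ss}, $Y_k\ge 0$.

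The first step is to show that $Z_k := Y_k - \beta\,(k\wedge\tau)$ is a supermartingale with respect to the natural filtration. On the event $\{\tau\le k\}$ the process is frozen, so $Z_{k+1}=Z_k$. On the event $\{\tau>k\}$ we have $\px_k\in X_\safe$, so \eqref{eq:barrier_expectation} together with the Markov property of \eqref{eq:system} (i.i.d.\ noise, transition kernel $T$) gives
\[
\mathbb{E}[Y_{k+1}\mid \mathcal F_k]\le Y_k+\beta .
\]
Hence $\mathbb{E}[Z_{k+1}\mid\mathcal F_k]\le Z_k$. Integrability of each $Y_k$ follows inductively from the same inequality, starting from $B(x_0)\le\alpha<\infty$.

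The second step converts this into a probability bound. The process $W_k := Y_k + \beta(\hor-k\wedge\tau)$ for $k\le\hor$ is a nonnegative supermartingale, using \eqref{eq:barrier_ss} and $k\wedge\tau\le \hor$. Ville's (Doob's maximal) inequality for nonnegative supermartingales then gives
\[
\Pr^{x_0}\Big[\max_{0\le k\le \hor} W_k\ge 1\Big]\le W_0 = B(x_0)+\beta\hor\le \alpha+\beta\hor,
\]
using \eqref{eq:barrier_initial}.

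It remains to link this to unsafety. If $\tau\le\hor$, then at $k=\tau$ we have $W_\tau\ge Y_\tau=B(\px_\tau)\ge 1$ by \eqref{eq:barrier_unsafe}, because $\px_\tau\in X_\unsafe$. So the event of leaving $X_\safe$ within the horizon is contained in $\{\max_{k\le\hor}W_k\ge1\}$. This yields
\[
\Pr^{x_0}[\forall k\le\hor,\ \px_k\in X_\safe]\ge 1-(\alpha+\beta\hor).
\]
Taking the infimum over $x_0\in X_0$ and combining with the trivial case gives \eqref{eq:probabilityBarrierFunctions}.

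A simpler alternative avoids the maximal inequality. Since the process is stopped, $\{\tau\le \hor\}\subseteq\{Y_\hor\ge1\}$. By Markov's inequality, $\Pr^{x_0}[\tau\le\hor]\le\mathbb{E}[Y_\hor]$, and iterating the supermartingale bound gives $\mathbb{E}[Y_\hor]\le \alpha+\beta\hor$. This is the route I would actually write.

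The main obstacle is justifying the conditional-expectation step rigorously. Condition \eqref{eq:barrier_expectation} holds only for $x\in X_\safe$, which is exactly why the stopping at $\tau$ is needed. One also needs measurability of $B$ and the Markov property, so that the one-step conditional expectation given the full history equals $\int B(f(\px_k,w))\,p_\pw(dw)$. I would handle this by writing
\[
\mathbb{E}[Y_{k+1}\mid\mathcal F_k]=\mathbf 1_{\{\tau\le k\}}Y_k+\mathbf 1_{\{\tau>k\}}\,\mathbb{E}[B(f(\px_k,\pw_k))\mid\px_k],
\]
noting that $\{\tau>k\}\in\mathcal F_k$ and that $\pw_k$ is independent of $\mathcal F_k$.
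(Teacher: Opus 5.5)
Your proof is correct, and your preference for the Markov-inequality version is well founded: once the process is stopped, $\{\tau\le\hor\}\subseteq\{Y_\hor\ge 1\}$, so the maximal inequality adds nothing. It is, however, a different route from the one the paper relies on.

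The paper imports Theorem~\ref{th:barrier} from \cite{laurenti2025unifying} without proof. The argument behind it is the dynamic-programming one reproduced in the proof of Theorem~\ref{th:time-varying}, which reduces to Theorem~\ref{th:barrier} when $B(\cdot,i)\equiv B$, $\beta_i\equiv\beta$, and there are no obstacles. Let $V_k(x)$ be the probability of reaching $X_{\mathrm{u}}$ within $k$ steps from $x$. One uses the Bellman recursion: $V_k=1$ on $X_{\mathrm{u}}$, $V_0=0$ on $X_{\mathrm{s}}$, and $V_k(x)=\mathbb{E}[V_{k-1}(f(x,\pw))]$ on $X_{\mathrm{s}}$. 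Induction then gives $V_k(x)\le B(x)+k\beta$ for every $x\in\mathbb{R}^n$, with \eqref{eq:barrier_unsafe} handling $X_{\mathrm{u}}$ and \eqref{eq:barrier_expectation} driving the step on $X_{\mathrm{s}}$. Taking $k=\hor$ and $x\in X_0$ concludes.

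You apply the same one-step inequality along trajectories and in expectation, rather than pointwise to a value function. The two approaches buy different things:
\begin{itemize}
\item The DP route is purely analytic. The measure theory is packed into the standard fact that $V_k$ obeys the recursion, and no filtrations, stopping times, or Markov inequality appear. It also exhibits $B+k\beta$ as an upper bound on $V_k$ itself. That is what makes the passage to time-indexed certificates $B(\cdot,k)$ with per-step $\beta_k$ in Theorem~\ref{th:time-varying} immediate, and it explains why those certificates can track $V_k$ closely.
\item Your route needs the bookkeeping you rightly flag: $\{\tau>k\}\in\mathcal{F}_k$, independence of $\pw_k$ from $\mathcal{F}_k$, and measurability of $B$. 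In return, it makes transparent why the drift condition is needed only on $X_{\mathrm{s}}$. It also yields for free the slightly sharper estimate $\Pr^{x_0}[\tau\le\hor]\le B(x_0)+\beta\,\mathbb{E}[\tau\wedge\hor]$.
\end{itemize}
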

Equation~\eqref{eq:probabilityBarrierFunctions} provides a lower bound on the probability of remaining in the safe set $X_s$.  Note that this formulation holds only for static $X_\safe, X_\unsafe$. In our setting, $X_\unsafe$ is dynamic.


\subsection{Non-negative Polynomials}

The synthesis of barrier functions can be posed as a nonlinear optimization problem. To enable efficient solution via convex programming, we employ SOS polynomials, providing a sufficient condition for polynomial non-negativity~\cite{HilbertUeberDD}. 

\begin{definition}[SOS Polynomial]
\label{def:sosp1}
A multivariate polynomial $\lambda(x)$ in the variable $x \in \mathbb{R}^{n}$ is called a SOS polynomial if there exist polynomials $\lambda_i(x)$, for $i = 1, \ldots, r$ and some $r \in \naturals$, such that
$\lambda(x) = \sum_{i=1}^{r} \lambda_i^2(x).$
If $\lambda(x)$ is SOS, then it satisfies $\lambda(x) \geq 0$ for all $x \in \mathbb{R}^{n}$. The collection of all SOS polynomials is denoted by $\Lambda$.
\end{definition}

Now, let $\algebraic$ be a compact basic closed semi-algebraic set described by the intersection of $L$ polynomial inequality constraints. Proposition \ref{prop:sosp1} states a sufficient condition to guarantee non-negativity over such a set.
\begin{proposition}
\label{prop:sosp1}
Let $X \subset \reals^n$ be a compact basic semi-algebraic set given by
$\algebraic,$
where each $h_l(x)$ is a polynomial. A polynomial $\gamma(x)$ is non-negative on $X$ if there exist $\lambda_0(x), \lambda_l(x) \in \Lambda$ such that
$\gamma(x) = \lambda_0(x) + \sum_{l=1}^{L} \lambda_l(x) h_l(x).$
\end{proposition}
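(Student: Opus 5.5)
The plan is to certify nonnegativity of $\gamma$ on $X$ pointwise, using only the sign information carried by each term of the representation; compactness of $X$ plays no role in this direction.

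\textbf{Step 1 (SOS terms are globally nonnegative).} By Definition~\ref{def:sosp1}, each $\lambda_0,\lambda_1,\ldots,\lambda_L \in \Lambda$ is a finite sum of squares of real polynomials. Hence $\lambda_i(x) \geq 0$ for every $x \in \reals^n$ and every $i \in \set{0,\ldots,L}$.

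\textbf{Step 2 (evaluate on $X$).} Fix an arbitrary $x \in X$. By the description of $X$, $h_l(x) \geq 0$ for all $l \in \set{1,\ldots,L}$. Consequently each product $\lambda_l(x) h_l(x)$ is a product of two nonnegative reals and is therefore nonnegative.

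\textbf{Step 3 (sum).} Substituting into the assumed identity gives
\begin{equation*}
\gamma(x) = \lambda_0(x) + \sum_{l=1}^{L} \lambda_l(x) h_l(x) \geq 0 .
\end{equation*}
Since $x \in X$ was arbitrary, $\gamma$ is nonnegative on $X$.

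The only point needing care is that the representation is a polynomial identity on all of $\reals^n$, so it may be evaluated at any $x \in X$. Beyond that, each step is an immediate sign check, and no step presents a genuine obstacle. The converse direction, a Putinar-type Positivstellensatz, would require compactness and an Archimedean condition, but it is not claimed here.
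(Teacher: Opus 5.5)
Your proof is correct: the paper states this proposition without proof, as the standard easy direction of an SOS/Positivstellensatz certificate, and your pointwise sign check (SOS multipliers are globally nonnegative and each $h_l \geq 0$ on $X$, so every term of the identity is nonnegative there) is exactly the intended justification. Your remark that compactness plays no role in this direction is also accurate.
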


\begin{corollary}
\label{cor:sosp1}
Let $\gamma(x)$ be a non-negative on $\algebraic$, and $h(x)$ be an $L$-dimensional vector whose $l$-th entry is $h_l(x)$. Further, let $\mathcal{L}$ be an $L$-dimensional vector of SOS polynomials, meaning that its $l$-th component is $\lambda_l(x) \in \Lambda$. Then, it follows that $\gamma(x) - \mathcal{L}^{T} h(x) \in \Lambda$.
\end{corollary}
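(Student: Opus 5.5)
The corollary follows directly from Proposition~\ref{prop:sosp1}, read in vector notation. The plan is to rewrite the certificate of Proposition~\ref{prop:sosp1} and isolate the remainder term. One caveat about the statement: literally, non-negativity of $\gamma$ on $X$ does not imply that every vector $\mathcal{L}$ of SOS multipliers makes $\gamma - \mathcal{L}^T h$ SOS. The intended reading is existential, namely that $\gamma$ admits such a certificate for some $\mathcal{L}$. I will prove that reading, and note that it is exactly the constraint form imposed in the synthesis program.

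First, collect the constraint polynomials $h_l$ into the vector $h(x)$ and the multipliers $\lambda_l$ into $\mathcal{L}$. Then $\sum_{l=1}^{L}\lambda_l(x)h_l(x)=\mathcal{L}^T h(x)$, so the representation in Proposition~\ref{prop:sosp1} reads
\[
\gamma(x)=\lambda_0(x)+\mathcal{L}^T h(x).
\]
Rearranging gives $\gamma(x)-\mathcal{L}^T h(x)=\lambda_0(x)\in\Lambda$, which is the claim.

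Conversely, suppose $\gamma-\mathcal{L}^T h=\lambda_0\in\Lambda$. For every $x\in X$ we have $h_l(x)\ge 0$ and $\lambda_l(x)\ge 0$, so $\mathcal{L}^T h(x)\ge0$, and therefore $\gamma(x)\ge\lambda_0(x)\ge0$. This shows that the condition $\gamma-\mathcal{L}^Th\in\Lambda$ is a valid sufficient certificate of non-negativity on $X$. That sufficiency is the direction actually used when constructing SOS programs.

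The only delicate step is the existence of $\mathcal{L}$ from mere non-negativity of $\gamma$. In general this requires a Positivstellensatz, namely Putinar's, under an Archimedean assumption on the description of $X$ and with strict positivity of $\gamma$. Compactness of $X$ lets one add a redundant ball constraint $R-\|x\|^2\ge0$ to make the description Archimedean. I would therefore interpret the corollary as holding whenever such a certificate exists, as in the paper's use of it. I expect this interpretive step, not the algebra, to be the main subtlety.
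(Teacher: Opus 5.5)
\textbf{Verdict: genuine gap.} Your second paragraph does not prove what it claims, and the statement itself is false under its stated hypotheses.

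Your third paragraph is correct, and it is the only true content here. If $\gamma-\mathcal{L}^{T}h\in\Lambda$ for some vector $\mathcal{L}$ of SOS polynomials, then $\gamma\ge 0$ on $X$. This is just Proposition~\ref{prop:sosp1} with $\lambda_0:=\gamma-\mathcal{L}^{T}h$. It is also the direction the paper actually uses, in the S-procedure step of the proof of Theorem~\ref{th:one-shot-certificate}. The paper itself states the corollary without proof.

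The gap is your second paragraph, which you present as a proof of the existential reading. Proposition~\ref{prop:sosp1} is a one-way sufficient condition: it does not say that a non-negative $\gamma$ admits a representation $\gamma=\lambda_0+\mathcal{L}^{T}h$. Invoking ``the representation in Proposition~\ref{prop:sosp1}'' therefore assumes the conclusion. What remains is the tautology that $\gamma=\lambda_0+\mathcal{L}^{T}h$ implies $\gamma-\mathcal{L}^{T}h=\lambda_0$. Retreating to ``the corollary holds whenever such a certificate exists'' is the same tautology.

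No argument can repair this, because the existential reading fails even for compact $X$ with an Archimedean description. Take $n=L=1$ and $h_1(x)=-x^2$, so $X=\{0\}$. The description is Archimedean, since $N-x^2=N+1\cdot h_1$ lies in the quadratic module. Let $\gamma(x)=x$, which is non-negative on $X$. Suppose $x=\lambda_0(x)-\lambda_1(x)x^2$ with $\lambda_0,\lambda_1\in\Lambda$. Evaluating at $0$ gives $\lambda_0(0)=0$. Writing $\lambda_0=\sum_i p_i^2$ then forces $p_i(0)=0$ for all $i$, so $x^2$ divides $\lambda_0$. It would then divide $x=\lambda_0-\lambda_1x^2$, a contradiction.

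You were right that Putinar's Positivstellensatz needs strict positivity. The correct conclusion, though, is that the hypothesis $\gamma\ge 0$ is too weak, not that the statement holds. Adding a redundant ball constraint would also change the vector $h$ in the statement, so it cannot rescue it either. The universal reading fails even more simply: $h_1=1-x^2$, $\gamma=0$, $\mathcal{L}=(1)$ gives $x^2-1\notin\Lambda$.

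A correct write-up should:
\begin{itemize}
\item say that the corollary as written is false in both readings;
\item give counterexamples such as the two above;
\item prove the converse implication from your third paragraph, which is all the SOS program of Theorem~\ref{th:one-shot-certificate} requires.
\end{itemize}
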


\section{Centralized Approach: Time-Invariant SBF}

We first formulate a centralized approach to Problem~\ref{Prob:Verification} that allows the adoption of (time-invariant) SBF in \eqref{eq:time-invariance SBF}.
%
%
To this end, we compose System~\eqref{eq:system} with the dynamic obstacles in one \emph{meta} system.  This is done by stacking up system state~$\px_k$ with obstacles configurations $o^j_k$, i.e., composed (meta) state: 
$$\pz_k := (\px_k, o_k^1, \ldots, o_k^{\nobs}) \in \reals^{n} \times \bigtimes_{j=1}^{\nobs} \reals^{p_j} =: Z.$$
Hence, the dynamics of the composed system are given by
\begin{equation}
\label{eq:joint_dynamics}
    \pz_{k+1} = f_z(\pz_k,\pw_k),
\end{equation}
where $f_z(\cdot,\cdot) = \Big(
f(\cdot,\cdot),
g^{1}(\cdot),
\ldots,
g^{\nobs}(\cdot) 
\Big).$
Note that this meta system evolves stochastically due to stochastic $\px_k$ component, even though the obstacle components $o^j_k$ evolve deterministically (according to \eqref{eq:obstacle}).
Hence, by lifting the relevant sets to the composed space, we can use SBF for safety analysis of this system.

\begin{definition}[Sets of interests in composed space]
    \label{def:sets}
    The obstacle set $Z_o$, safe set $Z_\safe$, and initial set $Z_0$ of the composed (meta) system in \eqref{eq:joint_dynamics} are defined as
    \begin{align*}
        &Z_o := \{(x,o^1,\ldots,o^{\nobs}) \in Z \mid \exists j \in \naturals_{\geq 1}^{\leq \nobs} \text{ s.t. } x \in X^j_o(o^j)\}.\\
        &Z_\safe := \{ (x,o^1,\ldots,o^{\nobs}) \in Z\setminus Z_o \mid x \in X_\safe\},\\
        &Z_0 := \{ (x, o_0^1, \dots, o_0^{\nobs}) \in Z \mid x \in X_0  \}.
    \end{align*}
\end{definition}


\begin{figure}[t!]
    \centering
\begin{tikzpicture}[scale=1.0]

\draw[->] (-0.2,0) -- (5,0) node[right] {$x$};
\draw[->] (0,-0.2) -- (0,4) node[above] {$o$};

\definecolor{trajpurple}{RGB}{150,120,200}
\definecolor{basegreen}{RGB}{140,200,160}

\begin{scope}[rotate around={30:(2.5,2)}]
    \fill[gray!25] (2.5,2) ellipse (1.2 and 0.6);
    \draw[black, thick] (2.5,2) ellipse (1.2 and 0.6);
    \node at (2.5,2) {\small Obstacle};
\end{scope}

\draw[trajpurple, dashed]
    (0.5,0.5)
    .. controls (1.2,0.6) and (1.8,0.8) ..
    (2.2,0.9)
    .. controls (2.8,1.0) and (3.4,1.1) ..
    (3.8,1.5)
    .. controls (4.2,2.0) and (4.4,2.6) ..
    (4.5,3.2);

\foreach \x/\y/\rx/\ry/\angle in {
    0.5/0.5/0.18/0.10/10,
    1.4/0.7/0.22/0.12/15,
    2.2/0.9/0.26/0.14/20,
    3.2/1.2/0.30/0.16/25,
    4.0/2.0/0.34/0.18/30,
    4.5/3.2/0.38/0.20/35
}{
    \pgfmathsetmacro{\ra}{\rx}
    \pgfmathsetmacro{\rb}{\ry}
    \pgfmathsetmacro{\intensity}{min(80, max(20, 200*(\ra+\rb)))}

    \begin{scope}[rotate around={\angle:(\x,\y)}]
        \fill[basegreen!\intensity] (\x,\y) ellipse ({\ra} and {\rb});
        \draw[trajpurple, thick] (\x,\y) ellipse ({\ra} and {\rb});
    \end{scope}
}

\fill[trajpurple] (0.5,0.5) circle (1.2pt);
\fill[trajpurple] (4.5,3.2) circle (1.2pt);

\node[trajpurple] at (0.5,0.2) {$k=1$};
\node[trajpurple] at (4.5,3.8) {$k = H$};

\end{tikzpicture}
\caption{Propagation of state distributions avoiding a static obstacle in the augmented space $Z$.}
\label{fig:distribution_propagation}
\end{figure}
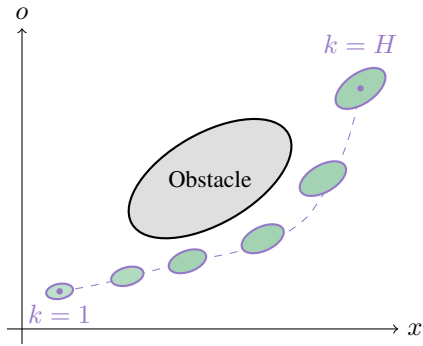

Note that, with this definition of $Z_o$ in the composed space, the obstacle set becomes static. This is formalized in the following lemma.

\begin{lemma}
    \label{lemma: obstacle set in Z}
    Consider System~\eqref{eq:system} with state $\px_k$ and dynamic obstacle configurations $o^j_k$, $j=\{1,\ldots, \nobs\}$ that evolve according to \eqref{eq:obstacle}, and the composed meta system~\eqref{eq:joint_dynamics} with state $\pz_k = (\px_k, o^1_k, \ldots, o^{\nobs}_k)$ and obstacle set $Z_o$ in Definition~\ref{def:sets}.  It holds that, for all $k \in \naturals$,
    \begin{align}
        \px_k \in \bigcup_{j=1}^{\nobs} X^j_o(o^j_k) \; \iff \; \pz_k \in Z_o.
    \end{align}
\end{lemma}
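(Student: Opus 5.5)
The proof will be a direct unfolding of definitions, using only the construction of the composed system~\eqref{eq:joint_dynamics} and the definition of $Z_o$ in Definition~\ref{def:sets}. The one substantive point is that the obstacle coordinates of $\pz_k$ coincide with the true obstacle configurations $o^j_k$ at every time $k$. Once that holds, membership of $\pz_k$ in $Z_o$ is literally the statement that $\px_k$ lies in some $X^j_o(o^j_k)$.

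\emph{Step 1: the components of $\pz_k$ agree with the true trajectories.} I will show by induction on $k$ that the $(j{+}1)$-th block of $\pz_k$ equals $o^j_k$ generated by~\eqref{eq:obstacle}, and that the first block equals $\px_k$ generated by~\eqref{eq:system} under the same noise realization $\pw_k$.
\begin{itemize}
\item For $k=0$ this holds by initialization, $\pz_0 = (\px_0, o^1_0,\ldots,o^{\nobs}_0)$.
\item For the inductive step, the form $f_z = (f, g^1,\ldots,g^{\nobs})$ gives
\begin{equation*}
\pz_{k+1} = \big(f(\px_k,\pw_k), g^1(o^1_k),\ldots,g^{\nobs}(o^{\nobs}_k)\big) = (\px_{k+1}, o^1_{k+1},\ldots,o^{\nobs}_{k+1}).
\end{equation*}
\end{itemize}
This is where care is needed. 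The statement is pathwise, so the equivalence should be read as holding for every realization of the noise. The obstacle components are deterministic, so no measurability subtlety arises for them.

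\emph{Step 2: the two directions.}
\begin{itemize}
\item ($\Rightarrow$) If $\px_k \in \bigcup_j X^j_o(o^j_k)$, pick $j \in \{1,\ldots,\nobs\}$ with $\px_k \in X^j_o(o^j_k)$. By Step~1, $\pz_k = (\px_k, o^1_k,\ldots,o^{\nobs}_k)$, so this $j$ witnesses the existential condition in the definition of $Z_o$, and hence $\pz_k \in Z_o$.
\item ($\Leftarrow$) If $\pz_k \in Z_o$, the definition gives some $j$ with $x \in X^j_o(o^j)$, where $(x,o^1,\ldots,o^{\nobs})=\pz_k$. By Step~1, $x=\px_k$ and $o^j=o^j_k$, so $\px_k \in X^j_o(o^j_k) \subseteq \bigcup_j X^j_o(o^j_k)$.
\end{itemize}

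I expect no real obstacle. The only step requiring attention is Step~1, which identifies the lifted state's components with the original processes, together with the remark that the equivalence is pathwise. Both directions then follow in one line each from Definition~\ref{def:sets}.
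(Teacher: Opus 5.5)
Your proof is correct and follows the same reasoning as the paper. The paper gives no formal argument beyond observing that the obstacle configurations are carried as coordinates of the augmented state, so that $\pz_k \in Z_o$ is, by Definition~\ref{def:sets}, literally the collision condition $\px_k \in \bigcup_{j} X^j_o(o^j_k)$. Your Step~1 induction is harmless but not strictly needed, since the lemma already takes $\pz_k = (\px_k, o^1_k, \ldots, o^{\nobs}_k)$ as given; it only confirms that this identification is consistent with the dynamics~\eqref{eq:joint_dynamics}.
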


The intuition is that the time dependence of the obstacles is absorbed into the augmented state $z$. In the composed space $Z$, the obstacle configurations are explicitly represented as part of the state, so collision checking depends only on $z$, not on time. As a result, the obstacle set $Z_o$ becomes time-invariant, while the obstacle dynamics are incorporated into the composed meta system. Fig.~\ref{fig:distribution_propagation} visualizes this intuition.

Using this augmented representation, the following theorem formalizes the problem of synthesizing a time-invariant SBF for System~\eqref{eq:system} with dynamic obstacles.   

\begin{theorem}[Time-Invariant Certificates with Dynamic Obstacles]
    \label{th:time-invariant}
    Consider stochastic System~\eqref{eq:system} with dynamic obstacles as described in Problem~\ref{Prob:Verification}, the composed meta system in \eqref{eq:joint_dynamics}, and the obstacle set $Z_o$, safe set $Z_\safe$, and initial set $Z_0$ defined in Definition~\ref{def:sets}. 
    If there exist a continuous function $\RA : Z  \to \reals_{\geq 0}$ and scalars $\alpha, \beta \geq 0$, such that
    \begin{subequations}
        \begin{align}
            &\RA(z) \geq 0 && \forall z \in Z, \\
            &\RA(z) \geq 1 && \forall z \in Z_o,\\
            &\RA(z) \leq \alpha && \forall z \in Z_0, \\
            &\mathbb{E}[\RA(f_z(z,\pw)) \mid z ] \leq \RA(z) + \beta
            && \forall z \in Z_\safe,
        \end{align}
    \end{subequations}
        then, it holds that
    \begin{align}
        \Pfin 
        \geq
        1 - \left(\alpha + \beta\cdot H\right).
    \end{align}
\end{theorem}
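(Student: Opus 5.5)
The plan is to reduce Theorem~\ref{th:time-invariant} to Theorem~\ref{th:barrier}, applied to the composed meta system~\eqref{eq:joint_dynamics} on the lifted space $Z$. There are three steps.

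\textbf{Step 1: equate the two safety events.} First I would show that the safety event for System~\eqref{eq:system} equals a safety event for the meta system with a \emph{static} safe set. Fix $x_0 \in X_0$ and set $z_0 = (x_0, o_0^1,\ldots,o_0^{\nobs}) \in Z_0$. The obstacle components evolve deterministically by~\eqref{eq:obstacle}, so the meta trajectory $\pz_k$ has first component $\px_k$ and remaining components $o^j_k$ for every realization of the noise. Lemma~\ref{lemma: obstacle set in Z} gives $\px_k \in \obsU(k) \iff \pz_k \in Z_o$. Combining this with Definition~\ref{def:sets} gives
\begin{equation*}
\px_k \in X_\safe \setminus \obsU(k) \iff \pz_k \in Z_\safe .
\end{equation*}
Hence
\begin{equation*}
\Pr^{x_0}\big[\forall k\le H,\ \px_k \in X_\safe\setminus \obsU(k)\big] = \Pr^{z_0}\big[\forall k \le H,\ \pz_k \in Z_\safe\big],
\end{equation*}
where the right-hand measure is induced by the kernel of $f_z$. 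That kernel is well defined because $f_z$ inherits measurability from $f$ and the $g^j$. Taking the infimum over $x_0\in X_0$ is the same as taking it over $z_0 \in Z_0$. So $\Pfin$ equals the static-set safety probability of the meta system with safe set $Z_\safe$ and initial set $Z_0$.

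\textbf{Step 2: check the hypotheses of Theorem~\ref{th:barrier}, which is the main obstacle.} Theorem~\ref{th:barrier} is stated with unsafe set equal to the complement of the safe set. Here the complement is
\begin{equation*}
Z\setminus Z_\safe = Z_o \cup \{z : x\notin X_\safe\},
\end{equation*}
but the statement only imposes $\RA \ge 1$ on $Z_o$. I would first try the reading under which the obstacle set is the only unsafe set, with the compact safe set $X_\safe$ treated as part of the domain. If that is not intended, I would instead argue directly rather than invoke the theorem as a black box.

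The direct argument is the standard stopped-supermartingale one. Let $\tau$ be the first exit time from $Z_\safe$ and consider the stopped process $\RA(\pz_{k\wedge\tau}) + \beta(H - k\wedge\tau)$. This is a nonnegative supermartingale, by the expectation condition on $Z_\safe$ together with $\RA\ge 0$. Ville's inequality then gives
\begin{equation*}
\Pr\big[\sup_{k\le H} \RA(\pz_{k\wedge\tau}) \ge 1\big] \le \alpha + \beta H .
\end{equation*}
On the event that the trajectory enters $Z_o$ before $H$, the stopped value is at least $1$. This handles obstacle collisions. Exits from $X_\safe$ are then covered by either assuming $\RA\ge1$ there, as the ``unsafe set'' in Theorem~\ref{th:barrier} implicitly requires, or noting that the paper intends $Z_o$ to encode all unsafe states. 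I would state this interpretation explicitly.

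\textbf{Step 3: conclude.} With Steps 1 and 2 in place, Theorem~\ref{th:barrier} (or the direct argument) yields
\begin{equation*}
\Pfin \ge 1 - \min\{\alpha + \beta H, 1\} \ge 1-(\alpha+\beta H),
\end{equation*}
which is the claimed bound. The remaining routine checks are that the continuity of $\RA$ ensures the expectations are well defined and measurable, and that the lemma holds for all $k\le H$ by induction on the deterministic obstacle recursion.
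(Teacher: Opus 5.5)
Your route is the paper's route. The paper's proof applies Theorem~\ref{th:barrier} to the lifted system~\eqref{eq:joint_dynamics} and then uses Lemma~\ref{lemma: obstacle set in Z} to identify $P_\safe(Z_\safe,Z_0,\hor)$ with $\Pfin$; these are your Steps~1 and~3, and your Step~1 just spells out what the paper states in one line. The mismatch you raise in Step~2 is genuine, and the paper's proof passes over it. Theorem~\ref{th:barrier} on $Z$ needs $\RA\ge 1$ on all of $Z\setminus Z_\safe=Z_o\cup\{(x,o^1,\ldots,o^{\nobs}) : x\notin X_\safe\}$, whereas the hypotheses only give it on $Z_o$.

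What you should tighten is the hedge. Your first reading treats the obstacles as the only unsafe set, with $X_\safe$ ``part of the domain''. It cannot be made rigorous: $\RA$ is a continuous function on all of $Z$, and $f_z$ can move a safe state out of $X_\safe$ into a region where $\RA$ is small. In fact the statement as written is false.

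\begin{itemize}
\item Take $n=1$, $f(x,w)=x+10$, $X_\safe=[-1,1]$ and $X_0=\{0\}$.
\item Take one obstacle with footprint $X^1_o(o)\equiv[999.5,1000.5]$.
\item Take $\RA(z)=\psi(x)$, where $\psi$ is continuous, equal to $1$ on $[999.5,1000.5]$ and equal to $0$ outside $(999,1001)$.
\end{itemize}

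All four conditions hold with $\alpha=\beta=0$, since for $x\in[-1,1]$ the successor lies in $[9,11]$, where $\psi=0$. The theorem would then give $\Pfin\ge 1$. Yet $\px_1=10\notin X_\safe$, so $\Pfin=0$ for every $\hor\ge 1$.

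The correct repair is your second option, stated as an added hypothesis rather than an interpretation: require $\RA(z)\ge 1$ for all $z\in Z\setminus Z_\safe$. This is exactly what Theorem~\ref{th:barrier} demands of the lifted system. It is also what Theorem~\ref{th:time-varying} builds in through $X_\unsafe(k)=(\reals^n\setminus X_\safe)\cup\obsU(k)$. With this hypothesis, either the black-box appeal to Theorem~\ref{th:barrier} or your stopped-supermartingale argument via Ville's inequality closes the proof.
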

\begin{proof}
    Per Theorem~\ref{th:barrier}, existence of $B$ guarantees $P_\safe({Z_\safe,Z_0},\hor) \geq 1 - \left(\alpha + \beta\cdot H\right)$ for the composed system.  Then, by Lemma~\ref{lemma: obstacle set in Z} and definition of $P_\safe$ in Def.~\ref{def: probabilistic safety}, we obtain $P_\safe({Z_\safe,Z_0},\hor) = \Pfin$.
\end{proof}


\begin{remark}
    The result of Theorem~\ref{th:time-invariant} extends straightforwardly to stochastic dynamic obstacles, i.e., $\po^j_{k+1} = g^j(\po^j_k,\pv_k)$, where $\pv_k$ is a random process.
\end{remark}

\begin{remark}
    \label{remark: SOS optimal for time-invariant}
    If sets $X_\safe, X_0, X^1_o(o^1), \ldots , X^{\nobs}_o(o^\nobs) \subset \reals^n$ are semi-algebraic, then sets $Z_\safe,Z_0,Z_o \subset Z$ are also semi-algebraic.  Hence, 
    under Assumption~\ref{asp:1},
    the certificate synthesis in Theorem~\ref{th:time-invariant} can be formulated as an SOS program
    as shown in~\cite{santoyo2021barrier, mazouz2022safety}.
\end{remark}

Theorem~\ref{th:time-invariant} solves Problem~\ref{Prob:Verification} by lifting the system to a higher-dimensional space. While this reduces the complexity of handling dynamic obstacles by converting them into a static setting, it introduces computational challenges, as solving the resulting functional optimization problem for the certificate in high dimensions is often intractable. In the next section, we introduce an alternative approach that avoids this curse of dimensionality by incorporating the complexity into the SBF itself.

\section{Time-Varying SBF}


We introduce a \emph{time-varying} formulation for SBFs based on DP that applies to settings with dynamic obstacles.
Unlike existing interpolation-based formulations~\cite{oumer2025k}, our approach adopts a backward recursion. This naturally accommodates time-varying unsafe sets by using a separate SBF at each time step, yields tighter safety bounds as shown in the experiments, and admits a convex program.

Theorem~\ref{th:time-varying} provides the formulation of the time-varying \emph{bounded-horizon} safety certificate.
\begin{theorem}[Time-Varying Certificates with Dynamic Obstacles]
\label{th:time-varying}
    Consider System~\eqref{eq:system} 
    with initial set $X_0$, safe set $X_s$, and time-varying unsafe set $X_u(k) := (\mathbb{R}^n \setminus X_{\safe}) \cup \obsU(k)$ for $k\in \naturals_{\geq 0}$, where $\obsU(k)$ is defined in Def.~\ref{def: probabilistic safety}. For horizon $\hor \in \naturals_0$, a real-valued function $\RA : \mathbb{R}^n \times \naturals^{\leq \hor}_{\geq 0} \to \mathbb{R}_{\geq 0}$ that is continuous in its first argument is a \emph{time-varying barrier certificate} if, for some constants $\alpha, \beta_i \in \reals_{\geq 0}$, it satisfies
    \begin{subequations}
    \label{eq:certificate-alt-tv}
        \begin{align}
        \!\!\!\!\! &\RA(x, i) \geq 0 
        && \forall x \in \mathbb{R}^n,\ \hspace{4mm} \forall i \in \naturals^{\leq \hor}_{\geq 0}, \label{eq:nonneg-alt-tv} \\
        &\RA(x, i) \geq 1 
        && \forall x \in X_\unsafe(i),\ \forall i \in \naturals^{\leq \hor}_{\geq 0}, \label{eq:unsafe-alt-tv} \\
        &\RA(x, H) \leq \alpha 
        && \forall x \in X_\initial, \label{eq:initial-alt-tv} \\
        &\mathbb{E}[\RA(f(x,\pw), i-1)\mid x] 
        && \nonumber \\
        &\qquad \leq \RA(x, i) + \beta_i 
        && \forall x \in X_\safe,\ \hspace{4mm} \forall i \in \naturals^{\leq \hor}_{\geq 1}, \label{eq:exp-alt-tv}
        \end{align}
    \end{subequations}
    Then, the safety probability is lower-bounded by
    \begin{align}
    \label{eq:tv-bound}
        \Pfin \geq 1 - \left(\alpha  + \sum_{i=1}^{ \hor} \beta_i
        \right ).
    \end{align}  
\end{theorem}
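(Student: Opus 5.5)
The plan is a stopped-process argument that follows the proof of Theorem~\ref{th:barrier}, with one change: the certificate index counts the \emph{remaining} steps. At time $k$ along a trajectory we evaluate $\RA(\cdot, H-k)$, so $\RA(\cdot,H)$ is used at $k=0$. This matches \eqref{eq:initial-alt-tv}. Condition \eqref{eq:exp-alt-tv} with $i=H-k$ then links $\RA(\cdot,H-k)$ at time $k$ to $\RA(\cdot,H-k-1)$ at time $k+1$.

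Fix $x_0\in X_0$. Let $\tau := \min\{k \le H : \px_k \in X_\unsafe(k)\}$, with $\tau=\infty$ if no such $k$ exists. The event in Definition~\ref{def: probabilistic safety} fails exactly when $\tau\le H$. Define the stopped, re-indexed process
\[
Y_k := \RA\bigl(\px_{k\wedge\tau},\, H-(k\wedge\tau)\bigr), \qquad k=0,\dots,H .
\]
Integrability is not a concern, since $\RA\ge 0$ by \eqref{eq:nonneg-alt-tv} and $\RA$ is continuous in $x$, hence measurable.

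The key step is to show by induction on $k$ that
\[
\mathbb{E}^{x_0}[Y_k] \le \alpha + \sum_{i=H-k+1}^{H}\beta_i .
\]
The base case $k=0$ is \eqref{eq:initial-alt-tv}. For the inductive step, condition on $\mathcal{F}_k$ and split into two events.
\begin{itemize}
\item On $\{\tau\le k\}$ the process is frozen, so $Y_{k+1}=Y_k$.
\item On $\{\tau>k\}$ we have $\px_k\in X_\safe\setminus X_o(k)$ and $Y_{k+1}=\RA(\px_{k+1},H-k-1)$. The Markov property together with \eqref{eq:exp-alt-tv} at $i=H-k$ gives $\mathbb{E}[Y_{k+1}\mid\mathcal{F}_k]\le Y_k+\beta_{H-k}$.
\end{itemize}
Because $\beta_{H-k}\ge 0$, both cases give $\mathbb{E}[Y_{k+1}\mid \mathcal{F}_k]\le Y_k+\beta_{H-k}$. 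Taking expectations closes the induction. At $k=H$ this yields $\mathbb{E}^{x_0}[Y_H]\le \alpha+\sum_{i=1}^H\beta_i$.

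To conclude, on $\{\tau\le H\}$ we have $Y_H=\RA(\px_\tau,H-\tau)$ with $\px_\tau$ in the unsafe set active at time $\tau$. Condition \eqref{eq:unsafe-alt-tv} then gives $Y_H\ge 1$. Markov's inequality, using $Y_H\ge0$, gives
\[
\Pr^{x_0}[\tau\le H]\le \Pr^{x_0}[Y_H\ge 1]\le \alpha+\sum_{i=1}^H\beta_i .
\]
Taking the infimum over $x_0\in X_0$ gives \eqref{eq:tv-bound}.

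The main obstacle I anticipate is the index bookkeeping in the unsafe condition. The argument needs $\RA(x,H-k)\ge 1$ on the unsafe set at \emph{time} $k$. Condition \eqref{eq:unsafe-alt-tv}, however, is written with $X_\unsafe(i)$ indexed by the certificate index $i$. I would handle this by making the convention explicit: the unsafe set attached to certificate $\RA(\cdot,i)$ is the one active when that certificate is evaluated, i.e.\ $X_\unsafe(H-i)$. Under that reading the proof above goes through verbatim, and the stated bound is unaffected. The conditioning step itself is routine and needs only the strong Markov structure of the kernel $T$.
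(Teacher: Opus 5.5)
Your proof is correct, but it takes a different route from the paper's. The paper argues backward and pointwise. It lets $V_k(x)$ be the probability of entering the unsafe set within $k$ steps from $x$, uses the Bellman recursion for $V_k$, and shows by induction on $k$ that $V_k(x)\le \RA(x,k)+\sum_{i=1}^{k}\beta_i$ for every $x\in\reals^n$ and every $k\le H$. Evaluating at $k=H$ on $X_0$ with \eqref{eq:initial-alt-tv} then gives \eqref{eq:tv-bound}. You instead argue forward along trajectories: a stopped, re-indexed process, an expectation bound carried by induction, and one application of Markov's inequality.

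The DP route gives a stronger intermediate result, namely a bound on the $k$-step failure probability from every state and every $k$. It also shows each $\RA(\cdot,k)$ as a pointwise upper bound on $V_k$, which is the paper's explanation of why time-varying certificates are tight, and it needs no stopping times. It does, however, rest on the DP characterization of $V_k$, which the paper imports rather than proves. Your route is self-contained: it works directly with $\Pr^{x_0}$ and the event in Definition~\ref{def: probabilistic safety}. The Markov step costs nothing, since you obtain exactly the same constant $\alpha+\sum_{i=1}^{H}\beta_i$. Also, you only use the ordinary Markov property at the deterministic time $k$, because the stopping is handled by splitting on $\{\tau>k\}\in\mathcal{F}_k$; the strong Markov property is not needed.

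Your worry about the indexing is justified, and it is more than a convention. If $X_{\unsafe}(i)$ is read literally as the unsafe set at \emph{time} $i$, the statement is false. Take:
\begin{itemize}
\item $H=1$, $f(x,w)=x+1$, $X_{\safe}=[-10,10]$, $X_0=\{0\}$;
\item one obstacle with $X_o(0)=[-4.6,-4.4]$ and $X_o(1)=[0.9,1.1]$, i.e.\ centre dynamics $g(o)=o+5.5$.
\end{itemize}
The trajectory from $0$ hits the obstacle at time $1$, so the safety probability is $0$. Now choose $\RA(\cdot,0)\ge 0$ continuous, at least $1$ on $X_{\unsafe}(0)$ and zero at $x=1$; this is possible because $1$ is at positive distance from $X_{\unsafe}(0)$. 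Set $\RA(x,1):=\max\{\RA(x+1,0),\phi(x)\}$, where $\phi\ge 0$ is continuous, equal to $1$ on $X_{\unsafe}(1)$, and $\phi(0)=0$. Then \eqref{eq:nonneg-alt-tv}--\eqref{eq:exp-alt-tv} hold as written with $\alpha=\beta_1=0$, yet the true safety probability is $0$.

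The paper's proof contains the same mismatch in hidden form. Its recursion tests $x\in X_{\unsafe}(k)$ when $k$ steps remain, which matches Definition~\ref{def: probabilistic safety} only if that set is the one active at time $H-k$. Requiring $\RA(\cdot,i)\ge 1$ on $X_{\unsafe}(H-i)$, as you propose, is therefore the right repair, and with it both arguments go through.
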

\begin{proof}
    Let $V_k(x)$ denote the probability that System~\eqref{eq:system} starting from $x$ reaches the unsafe set $X_u$ with $k$ time steps. Then $V_k(x)$ satisfies the following DP \cite{laurenti2025unifying}
    \begin{align*}
    V_k(x) =
        \begin{cases}
        1 & \text{if } x \in X_u(k),\\
        0 & \text{if } x \in X_s \text{ and } k = 0,\\
        \mathbb{E}[\,V_{k-1}(x') \mid x\,] & \text{if } x \in X_s,~ k > 0.
        \end{cases}
    \end{align*}
    Then, to conclude, for all $x \in \mathbb{R}^n$ and $k \in \mathbb{N}^{\le \hor}_{\ge 0}$, it is enough to show that
    $V_k(x) \le B(x,k) + \sum_{i=1}^{k}\beta_i.$
We do that by induction. For $k = 0$, we have that $V_0(x)=1$ for all $x\in X_u(0)$ and $0$ otherwise. Consequently, since 
$B(x,0) \ge 0,$ for all $x\in\mathbb{R}^n$ it holds that
$V_0(x) \le B(x,0).$ 
Next, assume the claim holds for $k-1$, i.e.,
$V_{k-1}(x') \le B(x',k-1) + \sum_{i=1}^{k-1}\beta_i
\quad \forall x'\in\mathbb{R}^n.$
Then for $x \in X_s$, it holds that
\begin{align*}
    V_k(x)
    = \mathbb{E}[\,V_{k-1}(x') \mid x\,] 
    &\le \mathbb{E}\Big[\,B(x',k-1) + \sum_{i=1}^{k-1}\beta_i \,\Bigm|\, x\Big] \\
    &= \mathbb{E}[\,B(x',k-1)\mid x\,] + \sum_{i=1}^{k-1}\beta_i.
\end{align*}
which gives
$        \mathbb{E}[\,B(x',k-1)\mid x\,] \le B(x,k) + \beta_k.$
    Substituting this into the previous inequality gives
\begin{align*}
    V_k(x)
    \le B(x,k) + \beta_k + \sum_{i=1}^{k-1}\beta_i 
    = B(x,k) + \sum_{i=1}^{k}\beta_i.
\end{align*}
\end{proof}

\clearpage

Intuitively, the time-varying barrier certificates in Theorem~\ref{th:time-varying} extend standard barrier certificates by allowing for each certificate to closely approximate the value function $V_k$ defined in the proof of Theorem~\ref{th:time-varying} at every time step. As we show in Section~\ref{sec:results}, this increased flexibility allows for tighter lower bounds on the probability of safety. 


We now establish the SOS formulation for the time-varying certificate under dynamic obstacles.
\begin{theorem}[Time-Varying SOS Certificate]  
\label{th:one-shot-certificate}
Let $\{\RA(x, i)\}_{i=0}^H$ be a sequence of polynomial functions. Further, consider semi-algebraic sets 
$X_{\safe} = \{ x \in \reals^n \mid h_s(x) \ge 0\}$, 
$X_\initial = \{ x \in \reals^n \mid h_\initial(x) \ge 0\}$, 
$X_{\unsafe} = \{ x \in \reals^n \mid h_u(x) \ge 0\}$, 
and $X_o^j(i) = \{ x \in \reals^n \mid h_{o,i}^j(x; o_i^j) \ge 0 \}$ for each $i \in \naturals^{\leq \hor}_{\geq 0}, j \in \{1,\dots,\nobs\}$, 
where $h_s$, $h_\initial$, $h_u$, and $h_{o,i}^j$ are vectors of polynomials. 
Let $\mathcal{L}_{s}(x)$, $\mathcal{L}_\initial(x)$, $\mathcal{L}_\unsafe(x)$, and $\mathcal{L}^j_{o,i}(x, o_i^j)$ be vectors of SOS polynomials of appropriate dimensions. Then, a \emph{time-varying barrier certificate} can be obtained by solving the SOS optimization problem:
    \begin{subequations}
          \label{pr:sos-1}
        \begin{align}
        & \!\!\! \min_{\substack{\alpha, (\beta_i)_{i=1}^{H}}} \!\!         \alpha + \sum_{i=1}^{ \hor} \beta_i \quad 
         \text{subject to:} \nonumber \\
        &\RA(x, i)\in \Lambda(x) && \hspace{-20mm} \forall i \in \naturals^{\leq \hor}_{\geq 0} \label{constraint1} \\
        & \RA(x, i) - 1 - \mathcal{L}_{\unsafe}(x)^{\top}h_{\unsafe}(x)  \in \Lambda(x) && \hspace{-20mm} \forall i \in \naturals^{\leq \hor}_{\geq 0}  \label{constraint2} \\
        & \RA(x, i) - 1 - \mathcal{L}^j_{o,i}(x, o^j_i)^{\top}h^j_{o,i}(x, o^j_i)  \in \Lambda(x, o^j_i) \nonumber \\ & \hspace{40mm} \forall i \in \naturals^{\leq \hor}_{\geq 0}, \forall j \in \naturals_{\geq 1}^{\leq \nobs} \!\!\!\!\!\!\!\!\! && \label{constraint2b} \\
              - & \RA(x, H) + \alpha - \mathcal{L}_{\initial}(x)^{\top}h_{\initial}(x) \in \Lambda(x) && \label{constraint3} \\
          & \RA(x, i) - \mathbb{E}[\RA(\px', i-1) \mid x ]  \!  + \beta_i - \nonumber \\
        & \hspace{2mm}  \mathcal{L}_{\safe}(x)^{\top}{h_{s}(x)} 
        + \mathcal{L}^j_{o,i}(x, o^j_i)^{\top}h^j_{o,i}(x, o^j_i) \in \Lambda(x, o^j_i) \nonumber \\
        & \hspace{41mm} \forall i \in \naturals^{\leq \hor}_{\geq 1}, j \in \naturals_{\geq 1}^{\leq \nobs} 
        \label{constraint5}    
        \end{align}
    \end{subequations}
which guarantees the probabilistic safety bound in~\eqref{eq:tv-bound}.
\end{theorem}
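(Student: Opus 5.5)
The plan is to show that any feasible point of the SOS program~\eqref{pr:sos-1} yields a function $\RA$ and constants $\alpha,\beta_i$ that satisfy conditions~\eqref{eq:nonneg-alt-tv}--\eqref{eq:exp-alt-tv} of Theorem~\ref{th:time-varying}. The bound~\eqref{eq:tv-bound} then follows directly from that theorem. Minimizing $\alpha+\sum_i\beta_i$ only selects the tightest certified bound among feasible points, so optimality plays no role in soundness. The argument is therefore constraint-by-constraint, using Proposition~\ref{prop:sosp1} and Corollary~\ref{cor:sosp1}.

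\textbf{Step 1 (unsafe and initial conditions).} Constraint~\eqref{constraint1} gives $\RA(\cdot,i)\ge 0$ on $\reals^n$, since SOS polynomials are nonnegative; this is~\eqref{eq:nonneg-alt-tv}. For~\eqref{constraint2}, write $\RA(x,i)-1=\lambda_0(x)+\mathcal{L}_\unsafe(x)^\top h_\unsafe(x)$ with $\lambda_0\in\Lambda$. On $X_\unsafe$ we have $h_\unsafe\ge0$ and $\mathcal{L}_\unsafe\ge0$, so $\RA(x,i)\ge1$ there. The same reasoning applied to~\eqref{constraint2b}, with $o^j_i$ fixed at the deterministic configuration obtained by propagating~\eqref{eq:obstacle}, gives $\RA(x,i)\ge1$ on each $X_o^j(o^j_i)$. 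Together these yield~\eqref{eq:unsafe-alt-tv} on $X_\unsafe(i)=(\reals^n\setminus X_\safe)\cup\obsU(i)$, under the convention that $h_\unsafe$ describes (a closure of) $\reals^n\setminus X_\safe$. Constraint~\eqref{constraint3} analogously gives $\RA(x,H)\le\alpha$ on $X_0$, which is~\eqref{eq:initial-alt-tv}.

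\textbf{Step 2 (expectation condition).} Under Assumption~\ref{asp:1}, $f$ is polynomial and $p_\pw$ has a closed-form moment generating function. Hence $\mathbb{E}[\RA(f(x,\pw),i-1)\mid x]$ is a polynomial in $x$ whose coefficients are moments of $\pw$, and~\eqref{constraint5} is a genuine polynomial SOS constraint. Applying Proposition~\ref{prop:sosp1}, for $x\in X_\safe$ we have $\mathcal{L}_\safe^\top h_\safe\ge0$, which gives $\RA(x,i)-\mathbb{E}[\RA(\px',i-1)\mid x]+\beta_i\ge0$. This is~\eqref{eq:exp-alt-tv}.

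\textbf{Main obstacle: the sign of the obstacle multiplier in~\eqref{constraint5}.} The obstacle term enters~\eqref{constraint5} with a plus sign. So on the region where $h^j_{o,i}\ge0$, the SOS certificate only yields $\RA-\mathbb{E}[\cdot]+\beta_i\ge-\mathcal{L}^{j\top}_{o,i}h^j_{o,i}$, which is a weaker conclusion. I would handle this as follows.

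First, note that condition~\eqref{eq:exp-alt-tv} is only needed on the set of points that are safe at time $i$. The proof of Theorem~\ref{th:time-varying} uses the recursion only for $x\notin X_\unsafe(i)$; on $X_\unsafe(i)$ we have $V_i=1\le\RA(x,i)$ directly by Step 1.

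Second, on $X_\safe\setminus\obsU(i)$ the product $\mathcal{L}^{j\top}_{o,i}h^j_{o,i}$ is the problematic quantity. The natural fix is to interpret the multiplier as acting on the complement description. That is, a point outside obstacle $j$ satisfies some $h^j_{o,i}\le0$, so the term $+\mathcal{L}^{j\top}_{o,i}h^j_{o,i}$ is nonpositive there and is consistent with an S-procedure certificate over $X_\safe\setminus X^j_o(i)$. I would make this explicit by requiring the obstacle to be described by a single polynomial inequality (or by treating each component separately). Then $\mathcal{L}h\le0$ off the obstacle, and the SOS identity yields nonnegativity of $\RA-\mathbb{E}[\cdot]+\beta_i$ on $X_\safe\setminus\obsU(i)$. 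If one insists on the statement exactly as written, I would restrict the claim to feasible multipliers obeying this sign convention.

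With all four conditions verified, I would invoke Theorem~\ref{th:time-varying} to conclude~\eqref{eq:tv-bound}.
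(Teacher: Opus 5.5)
Your proposal is correct and has the same skeleton as the paper's proof: you read off \eqref{eq:nonneg-alt-tv}--\eqref{eq:initial-alt-tv} from \eqref{constraint1}--\eqref{constraint3} via the S-procedure, then conclude with Theorem~\ref{th:time-varying}. The two differ at \eqref{constraint5}, and there your version is the more careful one. The paper simply asserts that \eqref{constraint5} enforces \eqref{eq:exp-alt-tv} on all of $X_\safe$. But the obstacle multiplier enters with a plus sign, so the SOS identity only gives $\RA(x,i)-\mathbb{E}[\RA(\px',i-1)\mid x]+\beta_i\ge-\mathcal{L}^{j\top}_{o,i}h^j_{o,i}$ for $x\in X_\safe$. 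That bound says nothing on $X_\safe\cap X_o^j(i)$, so the hypothesis of Theorem~\ref{th:time-varying}, as literally stated, is never established. You close this gap with two observations, both correct. First, with a single-polynomial obstacle description (as for the Euclidean balls used in the experiments), the right-hand side is nonnegative on $X_\safe\setminus X_o^j(i)\supseteq X_\safe\setminus\obsU(i)$. Second, the dynamic-programming induction behind Theorem~\ref{th:time-varying} uses the expectation inequality only at states outside $X_\unsafe(i)$, because on $X_\unsafe(i)$ one has $V_i=1\le\RA(\cdot,i)$ directly from \eqref{eq:unsafe-alt-tv}. Your route makes the argument sound; what it gives up is using Theorem~\ref{th:time-varying} as a black box (you use a weakened hypothesis justified by its proof) and full generality in the obstacle descriptions. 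That restriction is not a weakness of your argument. For vector-valued $h^j_{o,i}$, the term $\mathcal{L}^{j\top}_{o,i}h^j_{o,i}$ need not be nonpositive off the obstacle, so the S-procedure step genuinely fails there unless the components are handled separately.
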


\begin{proof}
    It suffices to show that if $\RA$ satisfies Conditions~\eqref{eq:nonneg-alt-tv}-\eqref{eq:exp-alt-tv}, bound in~\eqref{eq:tv-bound} holds.
    Constraint~\eqref{constraint1} implies $\RA(x,i)\ge 0$, yielding~\eqref{eq:nonneg-alt-tv}. 
By the S-procedure, Constraints~\eqref{constraint2}--\eqref{constraint2b} imply $\RA(x,i)\ge 1$ on $X_\unsafe(i)$, giving~\eqref{eq:unsafe-alt-tv}. 
Constraint~\eqref{constraint3} yields $\RA(x,0)\le \alpha$ on $X_\initial$, i.e.,~\eqref{eq:initial-alt-tv}. 
Finally, Constraint~\eqref{constraint5} enforces~\eqref{eq:exp-alt-tv} on $X_\safe$, completing the proof.
\end{proof}

\begin{remark}


    The framework can be extended to joint certificate and control synthesis, as shown in\cite{mazouz2026timevarying}. 
\end{remark}


The functional optimization problem resulting from Theorem~\ref{th:time-varying} introduces a larger number of optimization parameters to a time-invariant instance,
as it assigns a distinct barrier function $B(x,i)$ to each time step $i$. However, the parameterization does not incur the curse of dimensionality associated with the state augmentation in Theorem~\ref{th:time-invariant}. Instead, the time-indexed structures 
reduces the need for high expressive function classes that induce computational burden. As shown in our results, this formulation consistently yields significantly higher certified safety probabilities compared to time-invariant and interpolation-based approaches, while maintaining favorable scalability.

\section{Results}
\label{sec:results}

\begin{table*}
    \caption{Barrier benchmark results for \texttt{time-invariant}, \texttt{interpolation}, and \texttt{time-varying} approaches; the latter two use the same barrier degree $\texttt{deg}$. $\underline{P}_\safe$ is the obtained lower bound $P_\safe$, and $\tau$ is the computation time in seconds. A time-out (\texttt{TO}) time of 1000s is included in the benchmarks.}
    \label{tab:barrier}
    \centering
    \scalebox{0.95}
    {
         \begin{tabular}{l|c|c cccc| c cc | cccc }
            \toprule
        \multirow{2}{*}{Model} & \multirow{2}{*}{$H$} & \multirow{2}{*}{} 
        & \multicolumn{4}{c|}{\texttt{Time-Invariant}} 
        & \multirow{2}{*}{}
        & \multicolumn{2}{c|}{\texttt{Interpolation\cite{oumer2025k}}}  
        & \multicolumn{4}{|c}{\texttt{Time-Varying} (ours)}
        \\ 
        & & \texttt{deg}
        & {$\alpha$} & {$ \beta \cdot \hor$}    & $\low{P}_\safe$ &$\tau(s)$ 
        &\texttt{deg} & $\low{P}_\safe$ & $\tau(s)$
        & {$\alpha$} & {$\sum \beta_i$}  & $\low{P}_\safe$   & $\tau(s) $
        \\
        \hline
        Unstable 1D  
        & 5 & 30 
        & $7.6e^{-2}$ & $2.4e^{-2}$ & 0.90 & 142.54
        & 4 & 0.92 & 0.07
        & $1.6e^{-2}$ & $6.1e^{-5}$ & \textbf{0.98} & \textbf{0.11} \\
        & 10 & 30
        & $8.2e^{-2}$&  $4.8^{-2}$& 0.87 & 144.25 
        & 4 & 0.74 & 0.15
        & $2.7e^{-2}$ & $1.2e^{-3}$ & \textbf{0.97} & \textbf{0.14} \\
        & 15 & 30
        & $7.3e^{-2}$ & $8.1e^{-2}$ & 0.85 & 147.27
        & 4 & 0.35 & 0.62
        & $4.3e^{-2}$ & $6.8e^{-4}$ & \textbf{0.96} & \textbf{0.34} \\
        & 20 & 30 & $5.4e^{-3}$ & $1.7e^{-1}$ & 0.82 & 149.34 & 6  & 0.84 & 76.19 & $8.5e^{-4}$ & 0.05 & \textbf{0.90} & \textbf{7.18} \\
        \hline
        Unstable 2D  & 5 & 24
        & $2.6e^{-1}$ & $6.9e^{-2}$ & 0.67 & 179.00
        & 4 & 0.84 & 0.33
        & $3.3e^{-2}$ & $1.1e^{-3}$ & \textbf{0.97} & \textbf{0.21} \\
        & 10 & 24
        & $5.1e^{-2}$ & $3.6e^{-2}$ & 0.45 & 177.23
        & 4 & 0.47 & 0.96
        & $5.4e^{-2}$ & $2.3e^{-3}$ & \textbf{0.94} & \textbf{0.39} \\
        & 15 & 24
        & $7.2e^{-2}$ & $3.2e^{-2}$ & 0.24 & 176.41
        & 4 & 0.00 & 2.07
        & $8.5e^{-2}$ & $7.5e^{-4}$ & \textbf{0.91} & \textbf{1.06} \\
        & 20 & 24
        & $7.9e^{-1}$ & $1.2e^{-2}$ & 0.19 & 185.42
        & 6 & 0.34 & 50.12
        & $1.1e^{-1}$ & $1.2e^{-3}$ & \textbf{0.89} & \textbf{24.17} \\
        \hline
        Oscillator 2D & 10 & 10
        & $7.4e^{-1}$ & $6.2e^{-2}$ & 0.22 & 7.42
        & 2 & 0.00 & 7.88
        & 0.60 & $5.2e^{-4}$ & \textbf{0.40} & \textbf{2.34} \\
        & 15 & 10
        & $7.5e^{-1}$ & $1.2e^{-1}$ & 0.13 & 8.01
        & 4 & 0.00 & 12.85
        & $5.4e^{-8}$ & $6.5e^{-7}$ & \textbf{0.99} & \textbf{5.53} \\
        & 20 & 10 & $7.2e^{-1}$ &  $1.6e^{-1}$ & 0.12 & \textbf{7.59} & 4 & 0.00 & 41.07 & $4.9e^{-8}$ & $9.9e^{-7}$ & \textbf{0.99} & 8.71  \\
        \midrule
        Volterra 2D & 5 & 10
        & $1.1e^{-2}$ & $3.6e^{-2}$ & 0.95 & 19.01
        & 4 & \textbf{0.99} & 0.89
        & $2.7e^{-6}$ & $1.4e^{-5}$ & \textbf{0.99} & \textbf{0.81} 
        \\
        & 10 & 10
        & $1.1e^{-2}$ & $7.2e^{-2}$ & 0.92  & 19.46
        & 4 & \textbf{0.99} & 2.26
        & $1.4e^{-6}$ & $1.9e^{-5}$ & \textbf{0.99} & \textbf{1.98} \\
        & 10 & 10
        &$2.1e^{-2}$ & $4.9e^{-1}$  & 0.49 & 19.49
        & 4 & \textbf{0.99} & 3.41
        & $1.1e^{-6}$ & $1.6e^{-5}$ & \textbf{0.99} & \textbf{2.87} \\
        \midrule
        Dubin's 4D & 5 & 4
        & 1.00 & $4.9e^{-6}$ & 0.00 & 63.03
        & 2 & - & \texttt{TO}
        & $1.5e^{-1}$ & $3.5e^{-4}$ & \textbf{0.85} & \textbf{345.21} \\
        & 10 & 4
        & 1.00 & $5.0e^{-6}$ & 0.00 & 65.86
        & 2 & - & \texttt{TO}
        & $1.5e^{-1}$ & $6.1e^{-4}$ & \textbf{0.85} & \textbf{721.28} \\
         & 10 & 6
        & 0.80 & $7.8e^{-2}$ & 0.12 & \textbf{742.53}
        & 4 & - & \texttt{TO}
        & $1.3e^{-1}$ & $7.4e^{-4}$ & \textbf{0.87} & 802.17 \\
        \bottomrule
        \end{tabular}
    }
\end{table*}

\begin{table*}[t!]
    \caption{Barrier benchmark results with dynamic obstacles for \texttt{time-invariant}, \texttt{interpolation}, and \texttt{time-varying} approaches; the latter two use the same barrier degree $\texttt{deg}$. $\underline{P}_\safe$ is the obtained lower bound $P_\safe$, and $\tau$ is the computation time in seconds. A time-out (\texttt{TO}) time of 1000s is included in the benchmarks.}
    \label{tab:barrier-dynamic-obtacles}
    \centering
    \scalebox{0.95}
    {
        \begin{tabular}{l|c|c cccc| c cc | cccc }
            \toprule
            \multirow{2}{*}{Model} & \multirow{2}{*}{$H$} & \multirow{2}{*}{} 
            & \multicolumn{4}{c|}{\texttt{Time-Invariant}} 
            & \multirow{2}{*}{}
            & \multicolumn{2}{c|}{\texttt{Interpolation\cite{oumer2025k}}}  
            & \multicolumn{4}{|c}{\texttt{Time-Varying} (ours)} 
            \\ 
            & & \texttt{deg}
            & {$\alpha$} & {$ \beta \cdot \hor$}    & $\low{P}_\safe$ &$\tau(s)$ 
            & \texttt{deg} & $\low{P}_\safe$ & $\tau(s)$
            & {$\alpha$} & {$\sum \beta_i$}  & $\low{P}_\safe$   & $\tau(s) $
            \\
            \midrule
           Unstable 1D & 5 & 24
        & $5.1e^{-2}$ & 0.11 & 0.84 & 192.16
        & 4 & 0.88 & 0.19
        & $2.1e^{-2}$ & $8.5e^{-5}$ & \textbf{0.94} & \textbf{0.13} \\
        
        \textit{+1D obstacle} & 10 & 24  & $6.7e^{-2}$ & 0.39 & 0.54 & 198.57 
        & 4 & 0.62 & 0.21
        & $3.6e^{-2}$ & $2.4e^{-3}$ & \textbf{0.89} & \textbf{0.39} \\
            \midrule
            Oscillator 2D & 5 & 10 
            & $6.7e^{-1}$ & $4.6e^{-2}$ & 0.28 & 884.12
            & 2 & 0.38 & 289.17 
            & $3.5e^{-1}$ & $8.6e^{-5}$ & \textbf{0.65} & \textbf{5.49} \\
            
            \textit{+2D obstacle} & 10 & 10 
            & $6.9e^{-1}$ & $8.9e^{-2}$ & 0.22 & 900.12 
            & 4 & 0.22 & 786.22 
            & $1.9e^{-1}$ & $2.7e^{-6}$ & \textbf{0.81} & \textbf{27.58} \\
            
            & 15 & 10 
            & $7.4e^{-1}$ & $8.0e^{-2}$ & 0.18 & 912.37
            & 4 & - & \texttt{TO} 
            & $2.1e^{-1}$ & $3.1e^{-5}$ & \textbf{0.79} & \textbf{37.45} \\

            \hline
            Oscillator 2D & 5 & 6 
            & $6.9e^{-1}$ & $1.1e^{-2}$ & 0.20 & 914.38
            & 2 & - & \texttt{TO} 
            & $3.8e^{-1}$ & $2.2e^{-4}$ & \textbf{0.62} & \textbf{316.86} 
            \\
            
            \textit{2 $\times$ 2D obstacle} & 10 & 6 
            & $6.4e^{-1}$ & $2.3e^{-1}$ & 0.13 & 911.56 
            & 4 & - & \texttt{TO} 
            & $2.7e^{-1}$ & $2.0e^{-4}$ & \textbf{0.73} & \textbf{841.17} 
            \\
            
            & 15 & 6 
            & $8.9e^{-1}$ & $4.2e^{-2}$ & 0.07 & 953.88
            & 4 & - & \texttt{TO}  
            & $3.1e^{-1}$ & $4.3e^{-3}$ & \textbf{0.69} & \textbf{983.15} 
            \\

            \hline
            Quadrotor 4D & 5 & 4 
            & 1.00 & $7.6e^{-6}$ & 0.00 & 619.58  
            & 2 & - & \texttt{TO} 
            & $4.8e^{-2}$ & $2.9e^{-6}$ & \textbf{0.95} & \textbf{354.29} \\  
            
            \textit{+2D obstacle} & 10 & 4 
            & 1.00 & $3.5e^{-6}$ & 0.00 & 628.89 
            & 2 & - & \texttt{TO} 
            & $4.7e^{-2}$ & $4.6e^{-5}$ & \textbf{0.95} & \textbf{743.18} \\ 

            & 10 & 6
            & 0.68 & $1.1e^{-1}$ & 0.21 & 912.44
            & 4 & - & \texttt{TO}
            & $5.1e^{-2}$ & $6.8e^{-5}$ & \textbf{0.92} & \textbf{902.73} \\
                        
            \bottomrule
        \end{tabular}
    }
\end{table*}

In this section, we evaluate the proposed framework on a range of stochastic systems with dynamic obstacles. 
We compare the performance of the following approaches over a finite horizon $\hor \in \naturals_{\geq 0}$: (i) time-invariant, (ii) interpolation ~\cite{oumer2025k}, and (iii) time-varying certificates. 
All methods were implemented in Julia, leveraging \texttt{SumOfSquares.jl}~\cite{weisser2019polynomial} together with \texttt{JuMP}~\cite{dunning2017jump} to perform SOS optimization \footnote{Since the method in \cite{oumer2025k} does not admit an SOS formulation due to the bilinear term, as suggested in the paper, we obtain an SOS program by fixing the bilinear decision variable, and then iteratively optimize for it.}.
All experiments were conducted on a system featuring a 3.9 GHz 8-core processor and 128 GB of RAM. 

An overview of the results is reported in Tables~\ref{tab:barrier} and \ref{tab:barrier-dynamic-obtacles}. Table~\ref{tab:barrier} presents results for benchmark systems without obstacles, serving to evaluate the underlying frameworks, while Table~\ref{tab:barrier-dynamic-obtacles} reports results for systems with dynamics obstacles in the state space.

In all experiments involving obstacles, these dynamic unsafe regions are represented as Euclidean balls with a fixed radius. Further, to ensure consistent comparison across methods, a maximum verification time of 1000$s$ is imposed for each optimization instance. We note that for the time-invariant method, we present the results obtained for the highest barrier degree.
For a description of each of the benchmarks systems under consideration, we refer to the appendix.
In what follows, we discuss the results.

\subsection{Discussion}

The results show that our time-varying barrier certificates consistency yield higher lower-bound probabilities on $P_\safe$ compared to both time-invariant and interpolation-based approaches, across all benchmark systems and horizons.
In fact, empirical evaluations of $P_\safe$ show that time-varying barrier certificates obtain very tight lower-bounds. The corresponding Monte Carlo simulations for some of the systems are reported in Fig.~\ref{fig:mc-verification}, where the observed empirical safety probabilities closely align with the certified bounds.
Overall, the results demonstrate that time-varying barrier certificates provide the most reliable and scalable framework for safety certification in stochastic systems, particularly as horizon length  and environmental complexity increase.

\paragraph{Benchmark without obstacles}

The results in Table~\ref{tab:barrier} highlight clear performance differences across the three approaches. The time-invariant  formulation generally requires high-degree polynomials to obtain non-conservative bounds, reflecting its limited ability to represent complex safety structure with a single global certificate.
This limitation becomes particularly evident as the horizon increases and in higher-dimensional or unstable systems, such as the oscillator and Dubin's car, where low safety bounds are obtained despite substantial computational effort. Although increasing the barrier degree somewhat alleviates this, the improvement comes at a substantial computational cost.

Interpolation-based certificates partially mitigate the conservatism, particularly in low-dimensional settings and for short horizons, where they achieve reasonably tight bounds at low computational cost. However, this improvement does not persist as the horizon increases or as system complexity grows. In several cases, such as the oscillator benchmark, their performance degrades noticeably with increasing horizon, with bounds collapsing to zero despite increases in computation time. Moreover, the method fail to scale to higher-dimensional systems; e.g., in the Dubin's car benchmark, the approach does not return a non-trivial solution.


Interpolation-based certificates partially mitigate the conservatism of the time-invariant formulation, particularly in low-dimensional settings and for short horizons, where they achieve reasonably tight bounds at low computational cost. However, this improvement does not persist as the horizon increases or as system complexity grows. In several cases, such as the oscillator benchmark,  bounds collapse to zero despite modest increases in computation time. Moreover, interpolation-based methods fail to scale to higher-dimensional systems; for instance, in the Dubin's car benchmark, they consistently time out.

In contrast, time-varying barrier certificates consistently produce tight lower bounds across all benchmarks. They maintain high safety probabilities even as the horizon increases, as seen in both linear and nonlinear dynamics, including cases where competing methods become either overly conservative or infeasible. This behavior indicates that the time-varying formulation effectively captures the temporal structure of the problem, enabling accurate approximation of the underlying safety probability step while remaining computationally tractable.

\begin{figure*}[t]
    \centering
    \begin{subfigure}[t]{0.27\textwidth}
        \includegraphics[width=\linewidth]{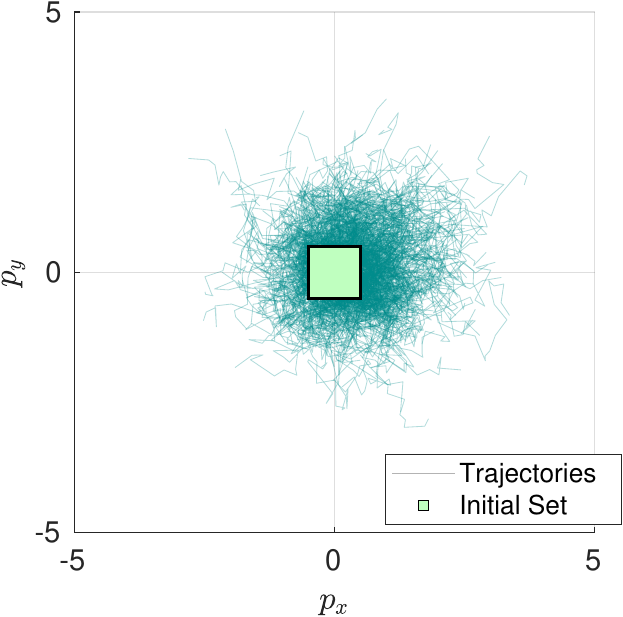}
        \caption{Dubin's Car: empirical $P_s = 0.99$}
        \label{fig:mc2}
    \end{subfigure}
    \hfill
    \begin{subfigure}[t]{0.33\textwidth}
        \includegraphics[width=\linewidth]{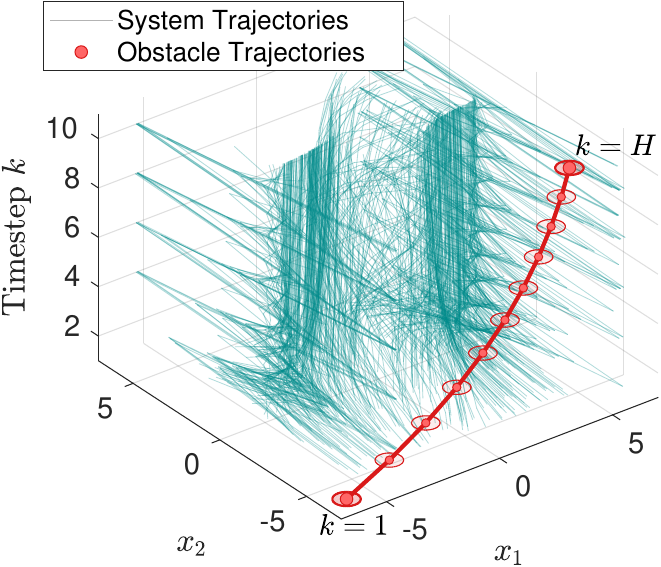}
        \caption{Oscillator: empirical $P_s = 0.87$}
        \label{fig:mc3}
    \end{subfigure}
    \hfill 
        \begin{subfigure}[t]{0.32\textwidth}
        \includegraphics[width=\linewidth]{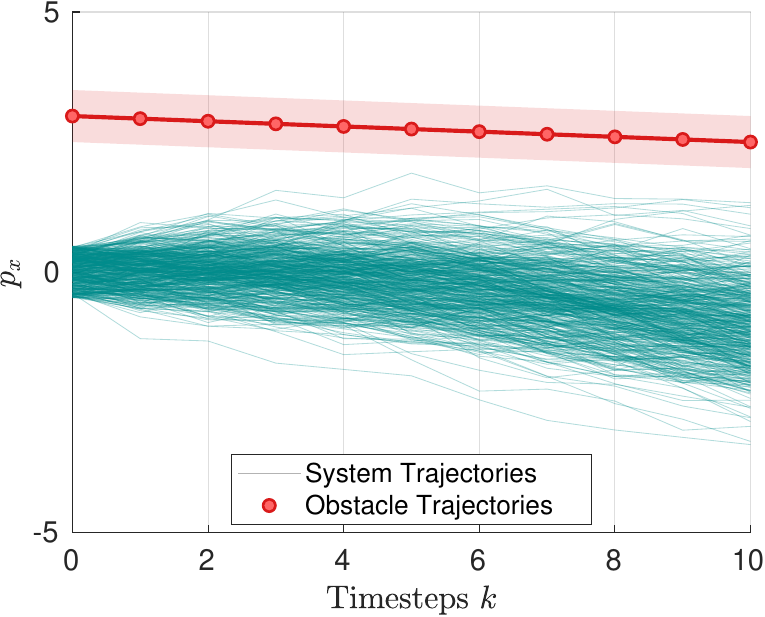}
        \caption{Quadrotor: empirical $P_s = 0.98$ }
        \label{fig:mc4}
    \end{subfigure}
        \caption{Monte Carlo simulations over a horizon ($H=10$) for the benchmark systems: dubin's car, oscillator, and quadrotor. The latter two incorporate dynamic obstacles, capturing time-varying interactions between the system and environment.}
    \label{fig:mc-verification}
\end{figure*}

\paragraph{Benchmark with dynamic obstacles} 
The results in Table~\ref{tab:barrier-dynamic-obtacles} further emphasize the importance of time-varying formulations when obstacle dynamics are embedded into the state space. This is effect becomes particularly pronounced when multiple obstacles are present. Time-invariant barrier certificates again follow the same trend:  they require relatively high-degree polynomials and incur substantial computational cost, yet still yield conservative bounds that deteriorate as the horizon increases or as the obstacle dynamics introduce additional complexity. Similarly, interpolation-based methods again either produce conservative guarantees or fail to solve within the allotted time for more complex systems. 

Time-varying barrier certificates, however, consistently achieve significantly higher safety probabilities, while remaining tractable within the time limits. This trend is corroborated by the Monte Carlo results in Fig.~\ref{fig:mc-verification}, which confirm that the proposed time-varying certificates are not only less conservative but also predictive of actual system behavior. Collectively, these results indicate that explicitly modeling temporal structure is critical, especially when safety depends on reasoning over system and environment interaction.
\section{Conclusion}

This paper introduces a framework for safety certification of stochastic dynamical systems in environments with dynamic obstacles using stochastic barrier functions. The approach develops both time-invariant and time-varying barrier certificates for discrete-time systems subject to uncertainty, providing a formal lower bounds on the probability of safety. The formulation explicitly accounts for time-varying unsafe regions induced by obstacle dynamics, and, through a Bellman recursion perspective, the time-varying certificates are shown to capture temporal structure and yield  less conservative bounds than state-of-the-art approaches. By restricting the certificates to polynomial functions, the synthesis problem admits a convex sum-of-squares relaxation, enabling efficient solution via semidefinite programming. Through benchmark studies on nonlinear systems, the proposed framework is shown to achieve tight safety guarantees while maintaining favorable computational scalability.

\section*{APPENDIX}
\section*{Benchmark Systems}
\label{appendix}

\subsection{Unstable Systems}
To showcase the intuition of each framework, we consider two linear stochastic systems with unstable dynamics and additive noise \( \pw_k \sim \N(0, 0.01) \).
The dynamics of the both (one-dimensional) system are
\begin{align*}
    \px_{k+1} = 1.05 \px_k + \pw_k,
\end{align*}
with \( X_{\safe} = [-1.0, 1.0] \) and \( X_{\initial} = [-0.1, 0.1] \).

The second (two-dimensional) system has the same dynamics. It operates in 
\( X_{\safe} = [-1.0, 0.5]^2 \) and \( X_{\initial} = [\pm 0.05]^2 \). \\

For the 1D problem, we consider two cases:
\begin{itemize}
    \item The system evolves without any additional constraints.
    \item A dynamic obstacle is introduced as an interval of radius \( r = 0.2 \) with center \( o_k^{1} \in \mathbb{R} \), initialized at \( o_0^{1} = 0.8 \), and evolving according to $o^{1}_{k+1} = 1.02\, o^{1}_k.$
\end{itemize}

\subsection{Oscillator}
The Van der Pol oscillator from~\cite{abate2020arch} is modeled as a discrete-time stochastic system with polynomial dynamics 
%
\begin{align*}
    \px^{(1)}_{k+1} &= \px^{(1)}_k + \lambda \px^{(2)}_k + \pw^{(1)}, \\
    \px^{(2)}_{k+1} &= \px^{(2)}_k + \lambda\big(-\px^{(1)}_k + (1 - (\px^{(1)}_k)^2) \, \px^{(2)}_k\big) + \pw^{(2)},
\end{align*}
where \( \lambda= 0.1 \) is the sampling time and \( \pw_k \sim \N(0, 4 \cdot 10^{-4} I) \).
%
The state space is given by \( X = [-7, 7]^2 \), and the initial set is \( X_{\initial} = [-5, 5]^2 \).
We consider three cases:
\begin{itemize}
    \item The system evolves without any additional constraints.
    \item A dynamic obstacle is introduced as a ball of radius \( r = 0.5 \) with center \( o_k^{1} \in \mathbb{R}^2 \), initialized at \( o_0^{1} = (-6,-6) \), and evolving according to \( o^{1}_{k+1} = \begin{bmatrix} 0.85 & -0.15 \\ 0.02 & 1.00 \end{bmatrix} o^{1}_k \).
    \item A second obstacle with the same radius and dynamics is introduced, initialized at $o_0^{2} = (6, 6)$.
\end{itemize}

\subsection{Lokta-Volterra}

For the third benchmark, we consider a discrete-time Lotka-Volterra type stochastic system with state \( \px_k = (v_k, p_k) \in \mathbb{R}^2 \), where \( v \) and \( p \) denote the prey and predator populations, respectively. The dynamics are 
\begin{align*}
v_{k+1} &= v_k + T \left(\theta v_k (1 - v_k) - \phi v_k p_k \right) + \pw_1, \\
p_{k+1} &= p_k - T \left(\psi p_k - \delta v_k p_k \right) + \pw_2,
\end{align*}
where \( T = 0.1 \), \( \theta = 1.1 \), \( \phi = 0.4 \), \( \psi = 0.4 \), and \( \delta = 0.1 \). The noise satisfies \( \pw_k \sim \N(0, \mathrm{diag}(0.01,\,0.005)) \).

The state space is \( X = [0, 10] \times [0, 5] \), and the initial set is \( X_{\initial} = [6, 7] \times [2, 3] \).

\subsection{Dubin's Car}
For the next benchmark, we consider a Dubin's car model with a polynomial representation of the trigonometric dynamics. The state is given by \( \px_k = (p_x(k), p_y(k), \theta_k, v_k) \in \mathbb{R}^4 \), where \( (p_x, p_y) \) denote position, \( \theta \) is the heading angle, and \( v \) is the speed. The dynamics are given by
\begin{align*}
p_x(k+1) &= p_x(k) + T \, v_k \left(1 - \tfrac{1}{2}\theta_k^2 \right) + \pw_1, \\
p_y(k+1) &= p_y(k) + T \, v_k \left(\theta_k - \tfrac{1}{6}\theta_k^3 \right) + \pw_2, \\
\theta_{k+1} &= \theta_k + T \, \omega + \pw_3, \\
v_{k+1} &= v_k + \pw_4,
\end{align*}
where \( T = 0.05 \) is the sampling time, \( \omega = 0.8 \) is a constant turn rate, and the noise satisfies \( \pw_k \sim \N(0, \mathrm{diag}(0.05,\,0.05,\,0.02,\,0.02)) \).

The state space is \( X = [-5,5] \times [-5,5] \times [-1.5,1.5] \times [0,3] \), and the initial set is \( X_{\initial} = [-0.5,0.5] \times [-0.5,0.5] \times [-0.2,0.2] \times [0.5,1.0] \).

\subsection{Quadrotor}

Finally, we consider a planar quadrotor model with state \(x = (p_x, v_x, \theta, \omega) \in \mathbb{R}^4\), where \(p_x\) and \(v_x\) denote the horizontal position and velocity, and \(\theta, \omega\) denote the pitch angle and angular velocity, respectively. The polynomial discrete-time dynamics are given by
\begin{align*}
p_{x,k+1} &= p_{x,k} + T v_{x,k} + \pw_1, \\
v_{x,k+1} &= v_{x,k} + T \left(k_T\!\left(\theta_k - \tfrac{1}{6}\theta_k^3\right) - g\right) + \pw_2, \\
\theta_{k+1} &= \theta_k + T \omega_k + \pw_3, \\
\omega_{k+1} &= \omega_k + T \left(k_\tau \theta_k + \tfrac{1}{2}\theta_k^3\right) + \pw_4.
\end{align*}
where \( \pw_k \sim \N(0, \mathrm{diag}(0.05,\,0.1,\,0.02,\,0.05)) \).

The state space is defined as
$X = [-5,5] \times [-5,5] \times [-1.5,1.5] \times [-5,5],$
and the initial set is chosen near an upright configuration, i.e.,
$
X_0 = [-0.5,0.5] \times [-0.5,0.5] \times [-0.2,0.2] \times [-0.5,0.5].
$. 
We consider two cases:
\begin{itemize}
    \item The system evolves without any additional constraints.
    \item A dynamic obstacle is introduced as a ball of radius \( r = 0.5 \) with center \( o^{1}_k \in \mathbb{R}^2 \), initialized at \( o^{1}_0 = (3,-1) \), and evolving according to $o^{1}_{k+1} = \begin{bmatrix} 1 & 0.05 \\ 0 & 1 \end{bmatrix} o^{1}_k $. 
\end{itemize}


\bibliographystyle{IEEEtran}
\bibliography{bibliography}


\end{document}